\documentclass{article}
\PassOptionsToPackage{table,dvipsnames}{xcolor}

\usepackage{microtype}
\usepackage{graphicx}
\usepackage{subcaption}
\usepackage{enumitem}
\usepackage{pifont}
\usepackage{multicol}
\usepackage{multirow}
\usepackage{booktabs} 

\usepackage{listings}
\usepackage{hyperref}

\usepackage[preprint]{icml2026}

\usepackage{amsmath}
\usepackage{amssymb}
\usepackage{mathtools}
\usepackage{amsthm}

\usepackage[capitalize,noabbrev]{cleveref}

\theoremstyle{plain}
\newtheorem{theorem}{Theorem}[section]
\newtheorem{proposition}[theorem]{Proposition}
\newtheorem{lemma}[theorem]{Lemma}

\theoremstyle{definition}
\newtheorem{definition}[theorem]{Definition}

\theoremstyle{remark}

\newcommand{\cmark}{\ding{51}} 
\newcommand{\xmark}{\ding{55}} 
\newcommand{\bstar}{\ding{72}} 
\newcommand{\wstar}{\ding{73}}

\DeclareMathOperator*{\argmin}{\arg\min}
\usepackage[textsize=tiny]{todonotes}

\icmltitlerunning{Robustness Beyond Known Groups with Low-rank Adaptation}

\begin{document}

\twocolumn[
  \icmltitle{Robustness Beyond Known Groups with Low-rank Adaptation}

  \icmlsetsymbol{equal}{*}

  \begin{icmlauthorlist}

    \icmlauthor{Abinitha Gourabathina}{EECS}
    \icmlauthor{Hyewon Jeong}{EECS}
    \icmlauthor{Teya Bergamaschi}{EECS}
    \icmlauthor{Marzyeh Ghassemi}{EECS}
    \icmlauthor{Collin Stultz}{EECS,HMS}

  \end{icmlauthorlist}

  \icmlaffiliation{EECS}{Department of Electrical Engineering \& Computer Science, Massachusetts Institute of Technology, Cambridge, MA, USA}

  \icmlaffiliation{HMS}{Harvard School of Medicine, Harvard, Boston, MA, USA}

  \icmlcorrespondingauthor{Collin Stultz}{cmstultz@mit.edu}

  \icmlkeywords{Machine Learning, ICML}

  \vskip 0.3in
]

\printAffiliationsAndNotice{}  
\begin{abstract}
Deep learning models trained to optimize average accuracy often exhibit systematic failures on particular subpopulations. In real world settings, the subpopulations most affected by such disparities are frequently unlabeled or unknown, thereby motivating the development of methods that are performant on sensitive subgroups without being pre-specified.  However, existing group-robust methods typically assume prior knowledge of relevant subgroups, using group annotations for training or model selection. We propose Low-rank Error Informed Adaptation (LEIA), a simple two-stage method that improves group robustness by identifying a low-dimensional subspace in the representation space where model errors concentrate. LEIA restricts adaptation to this error-informed subspace via a low-rank adjustment to the classifier logits, directly targeting latent failure modes without modifying the backbone or requiring group labels. Using five real-world datasets, we analyze group robustness under three settings: (1) truly no knowledge of subgroup relevance, (2) partial knowledge of subgroup relevance, and (3) full knowledge of subgroup relevance. Across all settings, LEIA consistently improves worst-group performance while remaining fast, parameter-efficient, and robust to hyperparameter choice. Our code is available at \href{https://github.com/abinithago/LEIA}{this repository.}
\end{abstract}

\section{Introduction}
As deep neural networks (DNNs) are trained with the goal of high average accuracy, they are not guaranteed to be performant with respect to any particular subgroup \citep{tsipras2019robustnessoddsaccuracy, shah2020pitfallssimplicitybiasneural, raghavan2020mitigating}. This is particularly problematic for high-stakes decision making, such as hiring \citep{raghavan2020mitigating, wilson2024gender}, detecting hate speech \citep{sap2019risk}, and predicting disease diagnoses \citep{obermeyer2019dissecting, banerjee2023shortcuts}, where predictions must be both accurate and unbiased across sensitive attributes to prevent the perpetuation of existing societal inequities. Failures of models to generalize to certain subpopulations can arise from several factors \citep{yang2023changehardcloserlook}, including spurious correlations \citep{geirhos2020shortcut}, where non-causal features erroneously influence predictions; attribute imbalance \citep{pmlr-v139-martinez21a}, in which the attribute distributions shifts between training and test time; class imbalance \citep{liu2019largescalelongtailedrecognitionopen}, where class label proportions cause models to prioritize the majority; and attribute generalization \citep{santurkar2020breedsbenchmarkssubpopulationshift}, when models encounter previously unseen attribute values at test time.

Many existing approaches for improving robustness across subpopulations require access to both group labels (which specify the group attribute) and class labels on the training data \citep{sagawa2020GDRO, idrissi2021simple}. This requirement is limiting in the real world, where such group labels may be expensive to obtain \citep{sohoni2020george, liu2021jtt,pmlr-v81-buolamwini18a}, noisy \citep{krumpal2013determinants,pmlr-v108-awasthi20a}, or missing \citep{kallus2022assessing,lahoti2020fairness}. Moreover, use of explicit group optimization strategies presupposes \emph{a priori} knowledge of relevant subgroups, which is not a realistic assumption. For instance, in clinical settings, the relevance of subgroups evolves with medical knowledge: early cardiovascular risk models did not account for zip code, which is now recognized as an important proxy for environment and socioeconomic status \citep{MUSAOGULLARI2025214}.

Alarmingly, presupposing the subgroups of importance and training models to perform well on these particular subgroups may result in \emph{worse} performance on unseen or unknown subgroups \citep{kaplow1999conflict, pmlr-v80-kearns18a, 10.1257/pandp.20181018, hashimoto2018fairness}. More recent methods do not rely on group labels during training \citep{liu2021jtt, zhang2022cnc, qiu2023simplefastgrouprobustness}, but often use labels in the validation set for hyperparameter tuning and early stopping, which is unrealistic for real-world deployment, where complete relevant group information is likely not available or known with certainty.

\begin{figure*}[!h]
    \centering
    \includegraphics[width=\linewidth]{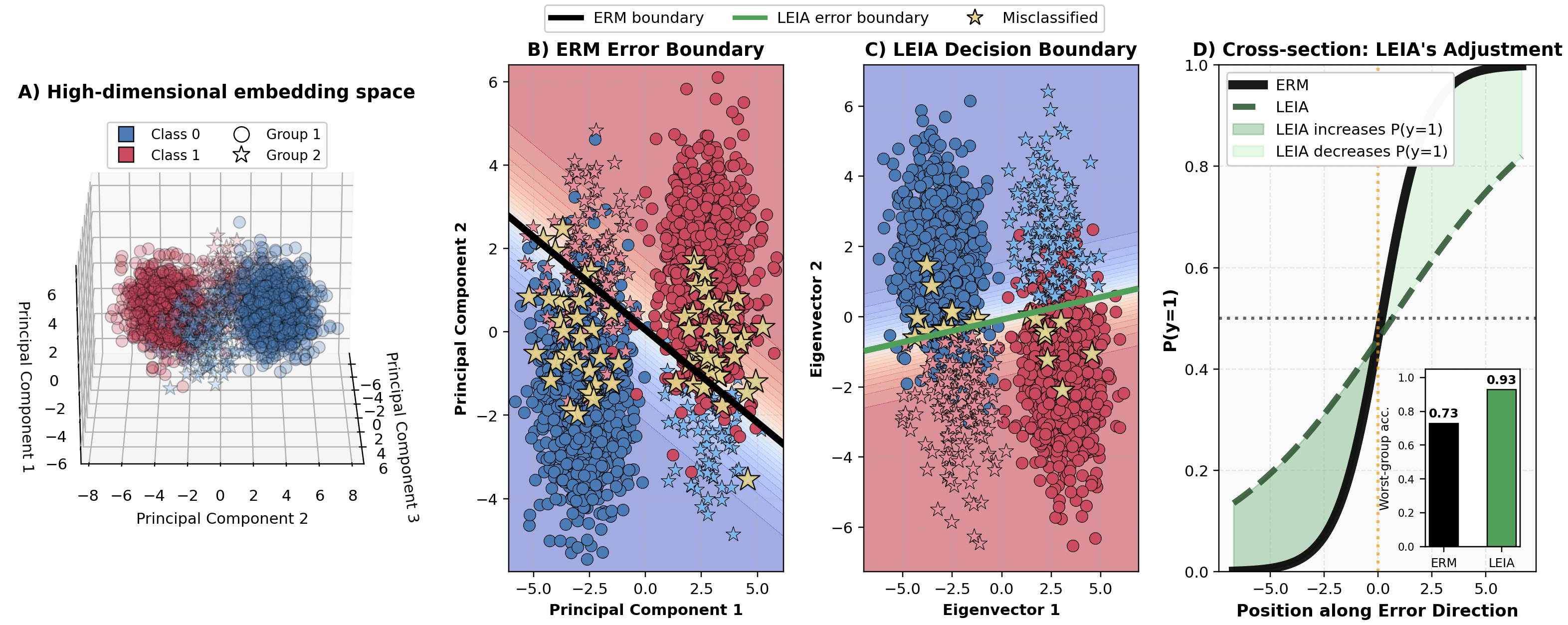}
    \caption{\textbf{Low-rank Error Informed Adaptation in action.} After training with ERM, LEIA adjusts ERM's decision boundary along a low-rank error subspace. A) High dimensional embedding space projected in three dimensions via PCA, showing an entangled cluster of binary classification problem with two subgroups ($\circ$)  and (\wstar). B) and C) ERM's boundary in the high variance directions compared to LEIA's adjusted boundary in the identified error direction (eigenvectors). Yellow stars indicate misclassified points of the worst-performing subgroup. D) Probability adjustments according to the identified principal eigenvector. Samples from the worst group (\wstar) that are initially misclassified with ERM are corrected by LEIA directly in logit space. We show the improved worst group accuracy in this synthetic data setup.}
    \label{fig:teaser_figure}
    \vspace{-0.3cm}
\end{figure*}

In this paper, we propose \textbf{L}ow-rank \textbf{E}rror \textbf{I}nformed \textbf{A}daptation (\textbf{LEIA}), a simple and efficient method for group robustness with minimal computational overhead beyond standard training. As in prior two-stage approaches \citep{kirichenko2022dfr, qiu2023simplefastgrouprobustness}, we freeze the feature extractor of a base model trained via empirical risk minimization (ERM) and focus adaptation on the final classification layer. However, rather than relying on explicit subgroup annotations
, LEIA adopts a geometric perspective on model errors. Using a weighted loss on a held-out subset of the training data, where weights emphasize examples on which the base model incurs high loss, LEIA identifies a low-dimensional \emph{error informed subspace} in the representation space. Adaptation is then restricted to this subspace via a low-rank additive adjustment to the classifier logits. We illustrate the LEIA method in Figure \ref{fig:teaser_figure}. The main contributions of this work are: 
\vspace{-1em}
\begin{enumerate}[itemsep=0em]
    \item We motivate the importance of assessing group robustness methods with unknown subgroups qualitatively, theoretically, and empirically. 
    \item We propose LEIA, an approach for improving robustness without requiring explicit subgroup annotations. By directly targeting the geometry of the model’s error patterns, LEIA mitigates latent subgroup errors that standard ERM and group-based methods fail to address, resulting in consistent gains in worst-group performance.
    \item We empirically validate LEIA using five real-world datasets, showing state-of-the-art performance in settings with (1) no knowledge of subgroup relevance, (2) partial knowledge of subgroup relevance, and (3) full knowledge of subgroup relevance. LEIA is faster and more parameter-efficient than other baselines, making it practical for large models and resource-constrained settings. Moreover, LEIA's data-driven hyperparameter selection significantly improving group robustness without careful tuning.
\end{enumerate}

\section{Related Work}
A growing body of literature focuses on group robustness to assess whether models learn meaningful feature relationships and achieve equitable performance \citep{yang2023changehardcloserlook}. Traditional ERM training, however, has been shown to learn misleading shortcuts \citep{mccoy2019right, ribeiro2016should, beery2018recognition, zech2018variable, winkler2019association, shahamatdar2024deceptive}, causing performance gaps in subpopulations \citep{zhao2017men, banerjee2023shortcuts}. 

Many existing methods for group robustness assume access to labeled relevant subgroups or attributes \citep{sagawa2020GDRO, arjovsky2017principledmethodstraininggenerative} or on a subset of the available data used to train the parameters of the model \citep{kirichenko2022dfr, yao2022improvingoutofdistributionrobustnessselective}. Such access is often unrealistic given how labels may be expensive to obtain, noisy, or missing. Moreover, relevant groups may be unknown. 

Recent work has explored the setting of assuming access to all subgroups during validation. These methods
commonly train an initial ERM model and then retrain a classifier with balanced classes \citep{kang2020decouplingrepresentationclassifierlongtailed}, upweighting misclassified samples \citep{liu2021jtt, nam2020learning}, or contrastive learning \citep{zhang2022cnc}. A prominent paradigm of second stage classifier retraining is group inference from the representations of the initial ERM. Prior approaches infer groups from clustered representations \citep{sohoni2020george}, identifying violations of the invariant risk minimization (IRM) objective, a trained spurious attribute classifer \citep{han2024improvinggrouprobustnessspurious}, and ensemble models \citep{to2025diverseprototypicalensemblesimprove}.  Instead of retraining a full classifier, Automatic Feature Reweighting (AFR) \citep{qiu2023simplefastgrouprobustness} is a fast method that retrains only the last layer of a standard ERM-trained base model. However, many of these methods are evaluated assuming full knowledge of group relevancy in the validation set, ignoring scenarios where some groups are unknown a priori. 

Our method also draws on prior work that established how biases can be mitigated with low dimensions in transformers and vision language models \citep{zhao2025bias, jang2025target, chuang2023debiasing, jung2024unified, gerych2024bendvlm, zhao2025bias}. Additionally, parameter-efficient adaptation techniques such as low-rank updates (LoRA) \citep{hu2022lora} have shown that task- or subgroup-specific improvements can be achieved by updating only a small fraction of model parameters \citep{hu2022lora, rimsky2024steering, wu2024reft, yin2024lofit}.

We provide an extended discussion of  related work in Appendix \ref{app:related_work_plus}.

\section{The Importance of Unknown Groups}
\label{sec:motivation}

A central challenge in robust machine learning is that the relevance of subpopulations may be \emph{unknown} at training time. We define unknown subgroups as groups that may be seen in the training dataset, but are unknown as an important or relevant subgroup for which to optimize performance. This phenomenon can stem from missing group labels during training time \citep{sohoni2020george} or not knowing which groups are actually relevant. For example, early clinical trials of beta-blockers predominantly enrolled male participants, as clinical outcomes on  women were ignored, leading to treatment guidelines that later proved less effective, and in some cases harmful, for women \citep{vogel2021lancet, mas2023representativeness, rossello2025beta}. 
Similar post-hoc discoveries of vulnerable subpopulations arise in pharmacogenomics \citep{martin2017assessment}, diagnostic imaging \citep{oakdenrayner2019hiddenstratificationcausesclinically}, and public health \citep{mccoy2024dataadaptiveidentificationeffectmodifiers}, where biological, demographic, or contextual factors interact in complex and previously unrecognized ways. In high-stakes domains like medicine, subgroup effects are routinely appreciated only after widespread adoption, thereby delaying their acceptance as reasonable metrics that one should use to gauge robust performance.

\vspace{-7pt}
\subsection{Group-Based Training Methods Can Falter with Incomplete Subgroup Knowledge}
\noindent
\textbf{Problem setup.}
Let $\mathcal{X}$ and $\mathcal{Y}$ denote the input and label spaces, and let
$\ell : \mathcal{X} \times \mathcal{Y} \to \mathbb{R}_+$ be a loss function.
Data are drawn from an unknown distribution $\mathcal{D}$ over
$\mathcal{X} \times \mathcal{Y}$, and $\mathcal{H}$ denotes a hypothesis class. We assume the population admits a latent partition into disjoint subgroups \(
\mathcal{G} = \{G_1, \dots, G_K, G^\star\},\) where all subgroups are present in the training distribution, but only $G_1,\dots,G_K$ are \emph{identified} and \emph{explicitly optimized for} during training.
The subgroup $G^\star$ represents an \emph{unknown (latent) group} whose membership is not annotated and therefore cannot be directly targeted by group-conditional objectives such as Group DRO.

For any $G \subseteq \mathcal{X} \times \mathcal{Y}$, define the subgroup risk
\[
\mathcal{R}_G(h)
\;=\;
\mathbb{E}_{(x,y)\sim\mathcal{D}}
\big[ \ell(h(x),y) \mid (x,y)\in G \big].
\]

where $h: \mathcal{X} \to \mathcal{Y}$.

\begin{definition}[Empirical Risk Minimization (ERM)]
Empirical Risk Minimization selects a hypothesis
\[
h_{\mathrm{ERM}}
\;\in\;
\argmin_{h \in \mathcal{H}}
\mathcal{R}(h),
\qquad
\mathcal{R}(h)
=
\mathbb{E}_{(x,y)\sim\mathcal{D}}[\ell(h(x),y)].
\]
Equivalently, ERM minimizes a prevalence-weighted average of subgroup risks,
\[
\mathcal{R}(h)
=
\sum_{G \in \mathcal{G}^\star} \pi_G \mathcal{R}_G(h),
\quad
\pi_G = \mathbb{P}((x,y)\in G).
\]
\end{definition}

\begin{definition}[Group Distributionally Robust Optimization (Group DRO) \citep{sagawa2020GDRO}]
Given access only to annotated groups $\{G_1,\dots,G_K\}$,
Group DRO selects a hypothesis
\[
h_{\mathrm{DRO}}
\;\in\;
\argmin_{h \in \mathcal{H}} \:
\max_{G \in \{G_1,\dots,G_K\}}
\mathcal{R}_G(h).
\]
\end{definition}

Although Group DRO provides worst-case guarantees over the annotated groups, it does not directly control performance on latent, unknown, subgroups that are present in the data but omitted from the optimization objective. Proposition~\ref{prop:unknown_group_dro} shows that worst-case optimization over known partitions can degrade performance on latent subpopulations relative to ERM when subgroup structure is incomplete. We use Group DRO as a seminal example, but this type of optimization for known groups extends to various other methods \citep{arjovsky2017principledmethodstraininggenerative, idrissi2021simple}.

\begin{proposition}
\label{prop:unknown_group_dro}
    Let $\mathcal{H}_\mathrm{DRO}(\mathcal{G}') \subseteq \mathcal{H}$ (where $\mathcal{G}' \subseteq \mathcal G$) denote the set of hypotheses that minimize the DRO objective over the known groups $G \in \mathcal{G}'$. Suppose we are given a latent partition $\mathcal{G} = \{G_1, G_2, \dots G_K\}$. Then, we can always find $i \in \{1, \dots, K\}$ such that at least one of the following is true about $\mathcal{H}_\mathrm{DRO}(\mathcal{G} \backslash \{G_i\})$
    \begin{enumerate}[label=(\roman*)]
        \item $h_\mathrm{ERM} \in \mathcal{H}_\mathrm{DRO}(\mathcal{G} \backslash \{G_i\})$
        \item $\exists h \in \mathcal{H}_\mathrm{DRO}(\mathcal{G} \backslash \{G_i\})$ s.t. $\mathcal{R}_{G_i}(h) > \mathcal{R}_{G_i}(h_\mathrm{ERM})$
    \end{enumerate}
\end{proposition}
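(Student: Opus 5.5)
The plan is a proof by contradiction. Suppose that for every $i \in \{1,\dots,K\}$ both (i) and (ii) fail. The failure of (ii) for index $i$ says that every $h \in \mathcal{H}_\mathrm{DRO}(\mathcal{G}\setminus\{G_i\})$ satisfies $\mathcal{R}_{G_i}(h) \le \mathcal{R}_{G_i}(h_\mathrm{ERM})$. The failure of (i) says that $h_\mathrm{ERM} \notin \mathcal{H}_\mathrm{DRO}(\mathcal{G}\setminus\{G_i\})$. Writing $M_i(h) = \max_{j\neq i}\mathcal{R}_{G_j}(h)$, the DRO minimizers $h_i$ then satisfy
\[
M_i(h_i) < M_i(h_\mathrm{ERM}) \quad\text{and}\quad \mathcal{R}_{G_i}(h_i) \le \mathcal{R}_{G_i}(h_\mathrm{ERM})
\]
for every $i$. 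To make this meaningful I will assume the DRO minimizers exist, so each set $\mathcal{H}_\mathrm{DRO}(\cdot)$ is nonempty.

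The key step is to choose $i$ cleverly. I take $i^\star$ to be an index that attains the ERM's non-worst structure. Concretely, let $j^\star \in \argmax_j \mathcal{R}_{G_j}(h_\mathrm{ERM})$ be a worst group for ERM, and pick any $i^\star \neq j^\star$; this requires $K \ge 2$. If $K=1$, then $\mathcal{G}\setminus\{G_1\}=\emptyset$, every hypothesis is DRO-optimal under the convention $\max\emptyset=-\infty$, and (i) holds trivially. Removing $G_{i^\star}$ keeps $G_{j^\star}$ in the objective, so $M_{i^\star}(h_\mathrm{ERM}) = \max_j \mathcal{R}_{G_j}(h_\mathrm{ERM})$.

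Now consider $h_{i^\star}$. It strictly improves the maximum over the groups other than $i^\star$, and by the failure of (ii) it is no worse on $G_{i^\star}$. So every subgroup risk of $h_{i^\star}$ is at most the corresponding ERM value, and strictly less than $\max_j \mathcal{R}_{G_j}(h_\mathrm{ERM})$ on all groups $j \neq i^\star$. I then want to conclude $\mathcal{R}(h_{i^\star}) = \sum_j \pi_j \mathcal{R}_{G_j}(h_{i^\star}) < \mathcal{R}(h_\mathrm{ERM})$, which would contradict the optimality of ERM.

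\textbf{This is the main obstacle.} The strict inequality against the maximum does not give a pointwise strict inequality $\mathcal{R}_{G_j}(h_{i^\star}) < \mathcal{R}_{G_j}(h_\mathrm{ERM})$ for each $j$, since groups below the max could get worse. I would first try to handle this by a cycling argument over all $i$: if every $i$ yields such an $h_i$, average the conditions, or pick $i$ as the group for which ERM is \emph{best}. If that does not close the gap, I would fall back on a weaker but still faithful reading. If ERM is also DRO-optimal for some $i$, then (i) holds. Otherwise, using the non-uniqueness of minimizers, I would exhibit a DRO minimizer for $\mathcal{G}\setminus\{G_i\}$ that is worse than ERM on $G_i$. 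This is possible because the DRO objective places no constraint on $G_i$, so one can modify $h$ on $G_i$ alone, assuming $\mathcal{H}$ is rich enough to act independently on disjoint groups (e.g.\ all measurable functions). That richness assumption would make (ii) immediate whenever $\ell$ is not identically minimal on $G_i$. I would state this assumption explicitly as the needed regularity condition.
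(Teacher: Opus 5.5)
\textbf{There is a genuine gap, and the index you pick can be the wrong one.} The problem is not a missing estimate: for the $i$ you select, (i) and (ii) can both fail. Both of your selection rules (any $i^\star\neq j^\star$, or the group on which ERM does best) use only the risk profile of $h_\mathrm{ERM}$, and that is not enough information.

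Take $K=3$, $\pi_{G_j}=1/3$, and a two-element class $\mathcal{H}=\{h_\mathrm{ERM},h\}$ with subgroup-risk vectors $(5,1,2)$ and $(4,1,4)$.
\begin{itemize}
\item $h_\mathrm{ERM}$ is the ERM solution (average $8/3<3$), $j^\star=1$, and $G_2$ is the unique group where ERM is best.
\item For $i^\star=2$ the reduced objective $\max\{\mathcal{R}_{G_1},\mathcal{R}_{G_3}\}$ is $5$ for $h_\mathrm{ERM}$ and $4$ for $h$, so $\mathcal{H}_\mathrm{DRO}(\mathcal{G}\setminus\{G_2\})=\{h\}$.
\item Hence (i) fails, and (ii) fails because $\mathcal{R}_{G_2}(h)=1=\mathcal{R}_{G_2}(h_\mathrm{ERM})$.
\end{itemize}
This is exactly the obstacle you flagged: $G_3$ gets worse below the maximum. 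Here the conclusion holds only for $i\in\{1,3\}$, and $i=1=j^\star$ is excluded by your rule.

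The cycling/averaging idea is not yet an argument. The $h_i$ are different hypotheses in a class that need not be convex, so there is nothing to average. What is needed is a rule for choosing $i$ that looks beyond $h_\mathrm{ERM}$. The fallback does not rescue the proof either. Independence of $\mathcal{H}$ across groups is an extra hypothesis not in the statement, and it is impossible when groups have overlapping input supports, since $h$ sees only $x$. So it would prove a different proposition.

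\textbf{The missing idea is to let the full-information DRO problem choose the index.} This is what the paper does.
\begin{itemize}
\item Take $h_{\mathrm{DRO+}}\in\mathcal{H}_\mathrm{DRO}(\mathcal{G})$ with value $m^\star=\max_j\mathcal{R}_{G_j}(h_{\mathrm{DRO+}})$.
\item Because every $\pi_G>0$, ERM is Pareto-optimal in the subgroup risks (the paper's auxiliary lemma). So if $h_{\mathrm{DRO+}}$ and $h_\mathrm{ERM}$ have different risk vectors, there is a $G_i$ with $\mathcal{R}_{G_i}(h_{\mathrm{DRO+}})>\mathcal{R}_{G_i}(h_\mathrm{ERM})$. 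In the example this is $G_3$.
\item The reduced optimal value $v$ satisfies $v\le m^\star$.
\item If $v=m^\star$, then $h_{\mathrm{DRO+}}$ is itself a reduced minimizer and witnesses (ii).
\item If $v<m^\star$, every reduced minimizer $h$ has $\mathcal{R}_{G_i}(h)\ge m^\star\ge\mathcal{R}_{G_i}(h_{\mathrm{DRO+}})>\mathcal{R}_{G_i}(h_\mathrm{ERM})$. Otherwise its worst-group risk over all of $\mathcal{G}$ would be below $m^\star$.
\end{itemize}
The contradiction therefore comes from the DRO-optimality of $h_{\mathrm{DRO+}}$, and ERM optimality is used only to pick $i$. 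Your plan tries to get the contradiction from ERM optimality alone, which is why it stalls.

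The degenerate case where $h_\mathrm{ERM}$ is itself full-DRO optimal needs its own choice; the paper handles it in a single sentence. Pick $i$ with $\mathcal{R}_{G_i}(h_\mathrm{ERM})<m^\star$ if one exists. Then either $h_\mathrm{ERM}$ attains the reduced optimum, giving (i), or every reduced minimizer has $\mathcal{R}_{G_i}\ge m^\star$, giving (ii). If all groups tie at $m^\star$, use $\pi_G>0$ to exclude $\mathcal{R}_{G_i}(h)=m^\star$.
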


\begin{proof}[Proof sketch]
The argument hinges on comparing the Group DRO solution when all groups are known to the solution obtained when one group is omitted. Let $h_{\mathrm{DRO+}} \in \mathcal H_\mathrm{DRO}(\mathcal G)$ denote a minimizer of the Group DRO objective over all groups. There must exist a group $G_i$ such that $\mathcal{R}_{G_i}(h_\mathrm{DRO+}) > \mathcal{R}_{G_i}(h_\mathrm{ERM})$ in the nontrivial case that the hypotheses are not equal.

Fix the group $G_i$ as unknown, and consider any minimizer $h_\mathrm{DRO} \in \mathcal H_\mathrm{DRO}(\mathcal G\setminus\{G_i\})$ . Either $h_\mathrm{DRO+}$ is one such minimizer, for which we are done; otherwise, we must have $\mathcal{R}_{G_i}(h_\mathrm{DRO}) \ge \mathcal{R}_{G_i}(h_\mathrm{DRO+})$, otherwise we reach a contradiction with the definition of $h_\mathrm{DRO+}$.
\end{proof}

We provide the full proof of the proposition in Appendix \ref{app:prop_proof}. We also provide synthetic data results showing Proposition \ref{prop:unknown_group_dro} in Appendix \ref{app:synthetic}.

\begin{figure*}[!h]
    \centering
    \includegraphics[width=0.88\linewidth]{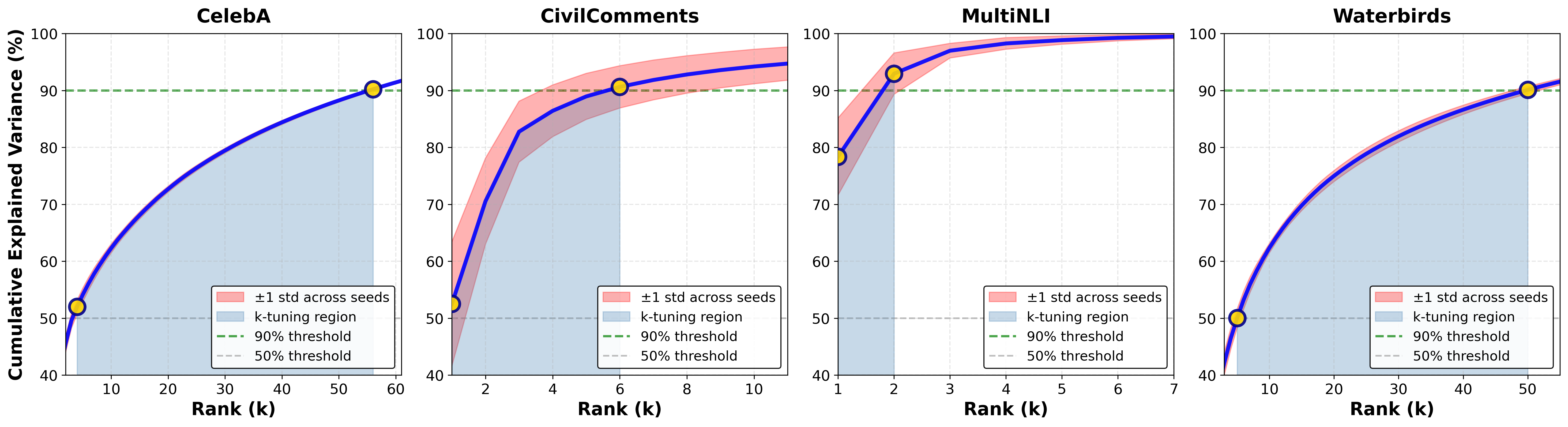}
    \caption{\textbf{Low-rank Error Structure across datasets}. The graphs show the cumulative explained variance (CEV) of the top-$k$ eigenvectors of the error-covariance matrix for (a) \textsc{CelebA}; (b) \textsc{CivilComments}; (c) \textsc{MultiNLI}; and (d) \textsc{Waterbirds}. CEV is defined as \( \scriptstyle{\sum_{i=1}^{k} \lambda_i / \sum_{i=1} \lambda_i}\). The range of $k$ for which the CEV is between 50\% and 90\% is shaded. The visualization demonstrates that (i) LEIA meaningfully learns an error-informed subspace; and (ii) this subspace is low-rank with respect to the embedding dimension (2048 for image datasets and 728 for text datasets).}
    \label{fig:low_rank_errors}
    \vspace{-0.3cm}
\end{figure*}

\section{Low-rank Error Informed Adaptation}
\label{sec:method}

The systematic failures on underrepresented subpopulations may manifest not as isolated mistakes, but as identifiable regions in representation space that are shared across many examples. Rather than correcting individual samples, a robust approach should directly identify and correct along directions in the representation space where errors are most concentrated. This intuition motivates \textbf{L}ow-rank \textbf{E}rror \textbf{I}nformed \textbf{A}daptation (\textbf{LEIA}), which adopts a geometric perspective by viewing misclassification and uncertainty as arising from a low-dimensional structure in the learned representation space (Figure \ref{fig:low_rank_errors}). By identifying principal directions along which high-loss examples concentrate, LEIA targets latent error modes without requiring subgroup labels (Algorithm \ref{alg:leia}).

\noindent
\textbf{Stage 1 Training.}
We consider a predictor that decomposes as
\( m_\theta = c_\phi \circ e_\psi \) where \(e_\psi : \mathcal{X} \to \mathbb{R}^d\) is a feature extractor and \(c_\phi : \mathbb{R}^d \to \mathbb{R}^C\) is the classifier head. Given an input \(x\), the model produces logits \(f(x) = c_\phi(e_\psi(x))\). The training data is partitioned into two disjoint subsets:
\(\mathcal{D}_{\mathrm{ERM}}\) for base training and
\(\mathcal{D}_{\mathrm{LEIA}}\) for adaptation.
We first train all parameters \(\theta=(\phi,\psi)\) on
\(\mathcal{D}_{\mathrm{ERM}}\) using standard empirical risk minimization (ERM), obtaining a checkpoint \(\hat{\theta}=(\hat{\phi},\hat{\psi})\). After convergence, both the feature extractor \(e_{\hat{\psi}}\) and the classifier head \(c_{\hat{\phi}}\) are frozen.

\begin{algorithm}[!t]
\caption{Low-rank Error Informed Adaptation (LEIA)}
\label{alg:leia}
\begin{algorithmic}
\STATE {\bfseries Input:} $\mathcal{D}_{\mathrm{ERM}}, \mathcal{D}_{\mathrm{LEIA}}, k, \gamma$, model $m_\theta=c_\phi\circ e_\psi$

\textbf{Stage 1: ERM}\\
1. Train $\theta=(\phi,\psi)$ on $\mathcal{D}_{\mathrm{ERM}}$ until convergence using ERM loss $\mathcal{L}^{\mathrm{ERM}}$\\
2. Freeze the feature extractor $e_{\hat{\psi}}$\\
\textbf{Stage 2: Low-rank Error Informed Adaptation}\\
3. Compute example weights $\mu_i$ from training losses (Equation ~\ref{eq:mu}).\\
4. Compute weighted feature covariance $\Sigma_{\mathrm{err}}$ and its top-$k$ eigenvectors $V_k$ (Equation~\ref{eq:sigma}).\\
5. Learn a low-rank linear adjustment $A$ restricted to $\mathrm{span}(V_k)$ via weighted risk minimization (Equation~\ref{eq:leia_update}).
\end{algorithmic}
\end{algorithm}

\noindent
\textbf{Spectral Decomposition of Error-Weighted Covariance.}
Using the frozen model $m_{\hat{\theta}}$, we compute logits $f(x_i)$ for all
$(x_i,y_i)\in\mathcal{D}_{\mathrm{LEIA}}$ and define example weights
\begin{equation}
\setlength\abovedisplayskip{2pt}
\setlength\belowdisplayskip{1pt}
\mu_i \propto \exp\!\big(\gamma\,\ell(f(x_i),y_i)\big),
\qquad \sum_i \mu_i = 1,
\label{eq:mu}
\end{equation}
where $\ell(\cdot)$ denotes cross-entropy loss and $\gamma > 0$ determines the loss-weight sharpness. 
Let $z_i = e_{\hat{\psi}}(x_i) \in \mathbb{R}^d$ and
$\bar{z} = \sum_i \mu_i z_i$.
We compute the error-weighted covariance
\begin{equation}
\setlength\abovedisplayskip{2pt}
\setlength\belowdisplayskip{1pt}
\Sigma_{\mathrm{err}}
=
\sum_i \mu_i (z_i - \bar{z})(z_i - \bar{z})^\top
\in \mathbb{R}^{d \times d}.
\label{eq:sigma}
\end{equation}
We retain the top-$k$ eigenvectors of $\Sigma_{\mathrm{err}}$, forming
$V_k \in \mathbb{R}^{d \times k}$, which defines a low-dimensional
\emph{error-informed subspace}.

\noindent
\textbf{Low Rank Adjustment Learning.}
Rather than retraining the full classifier, LEIA restricts adaptation to this error-informed
subspace by low-rank additive correction directly in logit space.
For input $x$, the adapted logits are
\begin{equation}
\setlength\abovedisplayskip{4pt}
\setlength\belowdisplayskip{4pt}
f_{\mathrm{LEIA}}(x)
=
f(x)
+
A^\top V_k^\top e_{\hat{\psi}}(x),
\label{eq:leia_logits}
\end{equation}
where $A \in \mathbb{R}^{k \times C}$ is the only learnable parameter.
We optimize $A$ by minimizing the weighted empirical risk
\begin{equation}
\min_A
\sum_{(x_i,y_i)\in\mathcal{D}_{\mathrm{LEIA}}}
\mu_i\,
\ell\!\left(
f(x_i)
+
A^\top V_k^\top e_{\hat{\psi}}(x_i),
y_i
\right).
\label{eq:leia_update}
\end{equation}
\noindent
\textbf{Effect of Parameters.}
LEIA introduces two hyperparameters: the rank \(k\) of the error-weighted subspace and the loss-weight sharpness \(\gamma\). The rank \(k\) controls the expressivity of the correction, with smaller values yielding more structured updates and larger values capturing more complex error patterns; we select \(k\) in a data-driven manner based on the amount of error structure captured (Figure~\ref{fig:low_rank_errors}). The parameter \(\gamma\) controls how strongly high-loss examples are emphasized when forming the error-weighted covariance, with larger values focusing adaptation on the most uncertain or misclassified points. We tune both \(k\) and \(\gamma\) in our experiments.

\section{Experiments}
We evaluate LEIA on five benchmarks, compared against 12 baselines, and provide detailed ablations on design decisions and hyperparameters. Further details on datasets, baselines, and implementation can be found in Appendix \ref{sec:expdetails}.
\subsection{Datasets}
We consider five image and text classification benchmarks. Following \cite{qiu2023simplefastgrouprobustness}, we split  the training set in a 80\%–20\% proportion for $\mathcal{D}_{\mathrm{ERM}}$ and $\mathcal{D}_{\mathrm{LEIA}}$, but we show that performance is not very
sensitive to the split in Appendix \ref{ref:splitting_ratio}. 

\textbf{\textsc{Waterbirds}} \citep{sagawa2020GDRO} is an image classification dataset where bird types are associated with a spurious background attribute (water or land). \textbf{\textsc{CelebA}} \citep{7410782} is an image classification dataset, where hair color is an imbalanced attribute with respect to gender. \textbf{\textsc{MultiNLI}} \citep{williams2018broadcoveragechallengecorpussentence} is a text benchmark of categorizing two sentences as entailing each other, contradicting each other, or neutral. The spurious correlation is between negation words like “never” and the  “contradiction” label. \textbf{\textsc{CivilComments}} \citep{borkan2019nuanced} is a toxicity text classification dataset containing underrepresented demographic groups. \textbf{\textsc{CheXpert}} \citep{irvin2019chexpertlargechestradiograph} is a large-scale medical dataset of chest radiographs with rare pathologies, especially amongst certain minority groups.

\subsection{Implementation}
We follow standard model choices and dataset splits consistent with prior work \citep{liu2021jtt, kirichenko2022dfr, qiu2023simplefastgrouprobustness}: for Waterbirds, CelebA, and CheXpert, we use ResNet-50 \citep{he2016deep} pretrained on ImageNet1k \citep{russakovsky2015imagenet}, for MultiNLI and CivilComments we use BERT \citep{devlin2018bert}
pre-trained on Book Corpus and English Wikipedia data. 

LEIA's hyperparameters $k$ and $\gamma$ are tuned on a held-out validation set. We choose $k$ in a principled manner according to cumulative variance captured (see Figure \ref{fig:low_rank_errors}). 
\setlength{\tabcolsep}{5pt}
\begin{table*}[!h]
\caption{
\textbf{Worst-group accuracy (WGA) and average accuracy on the test set with \underline{no information of group relevance.}} Rows with asterisks (*) reproduces baselines. Other results are reported from original papers of competitive baselines. Group Info (Train/Val) indicates whether group labels are used: \xmark{} = no group info used. Best and second-best values are \textbf{bold} and \underline{underlined}, respectively. ``-'' indicates results not reported. All results are the mean $\pm$ standard deviation (in \%) averaged over three independent runs. 
}
\centering
\resizebox{\textwidth}{!}{%
\begin{tabular}{cc|cc|cc|cc|cc|cc}
\toprule[2pt]
\multirow{2}{*}{\textbf{Algorithm}} &
\textbf{Group Info} &
\multicolumn{2}{c}{\textsc{Waterbirds}} &
\multicolumn{2}{c}{\textsc{CelebA}} &
\multicolumn{2}{c}{\textsc{CivilComments}} &
\multicolumn{2}{c}{\textsc{MultiNLI}} &
\multicolumn{2}{c}{\textsc{CheXpert}} \\
\cmidrule(lr){3-12}
&
\textbf{(Train/Val)} &
\textbf{WGA} & \textbf{Avg Acc} &
\textbf{WGA} & \textbf{Avg Acc} &
\textbf{WGA} & \textbf{Avg Acc} &
\textbf{WGA} & \textbf{Avg Acc} &
\textbf{WGA} & \textbf{Avg Acc} \\
\midrule
ERM* & \xmark/\xmark
& $69.1\scriptstyle{\pm4.7}$ & $84.1\scriptstyle{\pm1.7}$
    &
    $57.6\scriptstyle{\pm0.8}$ & $95.0\scriptstyle{\pm0.1}$
    &
    $63.2\scriptstyle{\pm1.2}$ & $\underline{85.4}\scriptstyle{\pm0.2}$
    &
    $66.4\scriptstyle{\pm2.3}$ & $81.0\scriptstyle{\pm0.3}$
    &
    $41.7\scriptstyle{\pm3.4}$ & $\textbf{88.6}\scriptstyle{\pm0.7}$\\

CRT* & \xmark/\xmark
& $76.3\scriptstyle{\pm0.8}$ & $89.2\scriptstyle{\pm0.1}$
    &
    $69.6\scriptstyle{\pm0.7}$ & $94.1\scriptstyle{\pm0.1}$
    &
    $67.8\scriptstyle{\pm0.3}$ & $82.7\scriptstyle{\pm0.1}$
    &
    $65.4\scriptstyle{\pm0.2}$ & $80.2\scriptstyle{\pm0.0}$
    &
    $74.6\scriptstyle{\pm0.4}$ & $79.3\scriptstyle{\pm0.1}$ \\

ReWeightCRT* & \xmark/\xmark
& $76.3\scriptstyle{\pm0.2}$ & $89.4\scriptstyle{\pm0.3}$
    &
    $70.7\scriptstyle{\pm0.6}$ & $94.2\scriptstyle{\pm0.1}$
    &
    $64.7\scriptstyle{\pm0.2}$ & $82.4\scriptstyle{\pm0.0}$
    &
    $65.2\scriptstyle{\pm0.2}$ & $80.2\scriptstyle{\pm0.0}$
    &
    $75.1\scriptstyle{\pm0.2}$ & $79.3\scriptstyle{\pm0.1}$ \\

JTT* & \xmark/\xmark
& $71.2\scriptstyle{\pm}0.5$ 
& $88.9\scriptstyle{\pm}0.6$ &
 $48.3\scriptstyle{\pm}1.5$ &
 $\textbf{95.9}\scriptstyle{\pm}0.0$ &
 $51.0\scriptstyle{\pm}4.2$ &
 $79.0\scriptstyle{\pm}1.8$ &
$65.1\scriptstyle{\pm}1.6$ &
$\underline{81.4}\scriptstyle{\pm}0.0$ &
$60.4\scriptstyle{\pm}4.8$ &
$75.2\scriptstyle{\pm}0.8$
 \\

CVaRDRO* & \xmark/\xmark
& $75.5\scriptstyle{\pm}2.2$ 
& $89.9\scriptstyle{\pm}0.4$  &
 $60.2\scriptstyle{\pm}3.0$ &
 $95.1\scriptstyle{\pm}0.1$ &
 $62.9\scriptstyle{\pm}3.8$ &
 $81.6\scriptstyle{\pm}0.7$ &
$48.2\scriptstyle{\pm}3.4$ &
$75.4\scriptstyle{\pm}0.2$ &
$50.2\scriptstyle{\pm}1.8$ &
$73.7\scriptstyle{\pm}1.0$
 \\

DFR* & \xmark/\xmark
& $89.0\scriptstyle{\pm0.2}$ & $92.2\scriptstyle{\pm0.2}$
    &
    $73.7\scriptstyle{\pm0.8}$ & $93.6\scriptstyle{\pm0.0}$
    &
    $64.4\scriptstyle{\pm0.1}$ & $80.7\scriptstyle{\pm0.0}$
    &
    $63.8\scriptstyle{\pm0.0}$ & $80.2\scriptstyle{\pm0.0}$
    &
    $\textbf{75.8}\scriptstyle{\pm0.3}$ & $79.1\scriptstyle{\pm0.0}$
     \\
DPE & \xmark/\xmark
& $\textbf{91.0}\scriptstyle{\pm0.5}$ & $\underline{92.5}\scriptstyle{\pm0.2}$
    &
    $\underline{81.9}\scriptstyle{\pm0.2}$ & $89.8\scriptstyle{\pm0.2}$
    &
    $\underline{69.9}\scriptstyle{\pm0.9}$ & 
    $82.2\scriptstyle{\pm0.2}$
    &
    $\underline{69.3}\scriptstyle{\pm0.8}$ &  $81.3\scriptstyle{\pm0.2}$
    &
    $-$ & $-$ \\
    LISA* & \xmark/\xmark
& $77.5\scriptstyle{\pm}0.7$ & $89.2\scriptstyle{\pm}0.6$
& $57.8\scriptstyle{\pm}0.8$ & $\underline{95.4}\scriptstyle{\pm}0.1$
& $65.8\scriptstyle{\pm}1.5$ &
$84.6\scriptstyle{\pm}0.1$ &
$66.8\scriptstyle{\pm}0.3$ &
$\textbf{81.7}\scriptstyle{\pm}0.1$ &
$37.4\scriptstyle{\pm}3.5$ &
$\underline{81.9}\scriptstyle{\pm}6.2$ 
\\
\rowcolor{teal!20}
LEIA (Ours) & \xmark/\xmark
& $\underline{90.1}\scriptstyle{\pm}0.1$ &$\textbf{93.7}\scriptstyle{\pm}0.7$
& $\textbf{82.8}\scriptstyle{\pm}0.5$ & $95.2\scriptstyle{\pm}0.1$
& $\textbf{71.6}\scriptstyle{\pm}0.7$ &
$\textbf{91.4}\scriptstyle{\pm}0.4$
& $\textbf{69.7}\scriptstyle{\pm}0.1$ 
& $81.2\scriptstyle{\pm}0.2$ 
& $\underline{75.3}\scriptstyle{\pm}0.4$ & $79.6\scriptstyle{\pm}0.2$ \\

\bottomrule[1pt]
\end{tabular}
}
\label{tab:no_group_info}
\end{table*}
\subsection{Attribute Availability}
We consider three settings of attribute availability:
\begin{itemize}[itemsep=0em]
    \item  First, we consider the setting of \textit{truly no information of subgroup relevance.} In many real-world scenarios, attribute annotations are not available during training or validation. In this setting, we apply hyperparameter selection and early stopping using the worst class label accuracy \citep{yang2023changehardcloserlook, to2025diverseprototypicalensemblesimprove}.
    \item Next, we study the unique setting of \textit{partial knowledge of subgroup relevance.} Here, we have only some subgroups defined during training and validation, and then assess performance on the entire set of subgroups, which includes unknown subgroups. This setting simulates scenarios where all spurious features or sensitive attributes may \textbf{not} be known during training. From this setting, we assess worst group performance on latent subgroups and whether methods overfit performance to known subgroups. We apply hyperparameter selection and early stopping using the worst group accuracy (WGA) over the set of known groups. 
    \item Finally, to compare with recent works \citep{kirichenko2022dfr, qiu2023simplefastgrouprobustness, to2025diverseprototypicalensemblesimprove}, we consider a scenario where there is \textit{complete knowledge of subgroup relevance} with the assumption that all subgroups of interest are well-defined and available in the validation set. We stress that this setting is not realistic but provided to compare with previous results. In this setting, we apply hyperparameter selection and early stopping using the WGA over all subgroups.
\end{itemize}

\subsection{Baselines}
To rigorously evaluate the effectiveness of LEIA, we compare performance against current state-of-the-art methods: Classifier Re-train (\textbf{CRT}, \textbf{ReweightCRT}) \citep{kang2020decouplingrepresentationclassifierlongtailed}, Just Train Twice (\textbf{JTT}) \citep{liu2021jtt}, (\textbf{CVaRDRO}) \citep{duchi2020learningmodelsuniformperformance}, Deep Feature Reweighting (\textbf{DFR}) \citep{kirichenko2022dfr}, Learning Invariant Predictors with Selective Augmentation (\textbf{LISA}) \citep{yao2022improvingoutofdistributionrobustnessselective}, Correct-n-Contrast (\textbf{CnC}) \citep{zhang2022cnc},
 Automatic Feature Reweighting (\textbf{AFR}) \citep{qiu2023simplefastgrouprobustness}, Group Inference via data
Comparison (\textbf{GIC}) \citep{han2024improvinggrouprobustnessspurious}, and Diverse Prototypical Ensembles (\textbf{DPE}) \citep{to2025diverseprototypicalensemblesimprove}. We also compare performance with  \textbf{ERM} and \textbf{Group DRO} \cite{sagawa2020GDRO}. 

\section{Results}
\subsection{LEIA Achieves Strong Group Robustness}
\noindent
\textbf{No Information of Group Relevance.} In the setting with no attribute information known in training or validation, our method LEIA achieves best or second best worst group accuracy (WGA) in \emph{all} datasets (see Table \ref{tab:no_group_info}). In particular, we show excellent performance on the \textsc{CivilComments} dataset where we achieve both the highest WGA at a 1.7\% increase to the second highest baseline (DPE) and a 6.0\% average accuracy increase from standard training (ERM). For the \textsc{CelebA} dataset, we achieve the highest WGA with an impressive average accuracy of 95.2\%. The closest baseline that exhibits such a high average accuracy is LISA, with a WGA $\sim$24.0\% lower.

\begin{table*}[!h]
\centering
\small
\setlength{\tabcolsep}{3pt}
\renewcommand{\arraystretch}{1.05}
\caption{\textbf{Worst-group accuracy (WGA) and average accuracy on the test set under \underline{partial information of group relevance}.} For \textsc{CheXpert}, known groups are defined by race or gender, while WGA is computed over all race $\times$ gender groups. For \textsc{CivilComments}, training uses four groups defined by a single known attribute (Black, LGBTQ, or Muslim), and WGA is evaluated over all 16 demographic groups. Group Info (Train/Val) denotes group-label usage: \xmark{} = none, \cmark{} = training, \cmark{}\cmark{} = validation labels used to retrain, \bstar{} = validation labels for tuning/early stopping only. Best and second-best results are \textbf{bold} and \underline{underlined}. All results are the mean $\pm$ std. (in \%) over five independent runs.}
\resizebox{\textwidth}{!}{%
\begin{tabular}{cc|cc|cc||cc|cc|cc}
\toprule[2pt]
\multirow{3}{*}{\textbf{Algorithm}} &
\textbf{Group Info} &
\multicolumn{4}{c}{\textsc{CheXpert}} $\vert\vert$&
\multicolumn{6}{c}{\textsc{CivilComments}}\\
\cmidrule(lr){3-12}
& \textbf{Known Attribute} &
\multicolumn{2}{c}{\textbf{Race}} $\vert$ &
\multicolumn{2}{c}{\textbf{Gender}} $\vert\vert$&
\multicolumn{2}{c}{\textbf{Black}} $\vert$&
\multicolumn{2}{c}{\textbf{LGBTQ}} $\vert$&
\multicolumn{2}{c}{\textbf{Muslim}} \\
\cmidrule(lr){3-12}
&
\textbf{(Train/Val)} &
\textbf{WGA} & \textbf{Avg Acc} &
\textbf{WGA} & \textbf{Avg Acc} &
\textbf{WGA} & \textbf{Avg Acc} &
\textbf{WGA} & \textbf{Avg Acc} &
\textbf{WGA} & \textbf{Avg Acc} \\
\midrule
ERM & \xmark/\bstar
& $41.7\scriptstyle{\pm3.4}$ & $\underline{88.6}\scriptstyle{\pm0.7}$
& $41.7\scriptstyle{\pm3.4}$ & $\underline{88.6}\scriptstyle{\pm0.7}$
& $48.0\scriptstyle{ \pm7.5}$ & $\textbf{92.3}\scriptstyle{ \pm0.1}$
& $48.0 \scriptstyle{\pm7.5}$ & $\textbf{92.3} \scriptstyle{ \pm0.1}$
& $48.0 \scriptstyle{ \pm7.5}$ & $\textbf{92.3} \scriptstyle{\pm0.1}$ \\

CRT & \xmark/\bstar
& $71.1 \scriptstyle{\pm6.9}$ & $80.9 \scriptstyle{\pm1.7}$
& $71.2 \scriptstyle{\pm6.9}$ & $80.8 \scriptstyle{\pm1.8}$
& $62.8 \scriptstyle{\pm3.5}$ & $90.4 \scriptstyle{\pm1.9}$
& $63.1 \scriptstyle{\pm3.5}$ & $90.6 \scriptstyle{\pm1.6}$
& $61.5 \scriptstyle{\pm2.5}$ & $90.0 \scriptstyle{\pm2.2}$ \\

ReWeightCRT & \xmark/\bstar
& $71.3 \scriptstyle{\pm6.1}$ & $80.8 \scriptstyle{\pm1.9}$
& $\underline{71.3} \scriptstyle{\pm6.1}$ & $80.7 \scriptstyle{\pm1.9}$
& $61.9 \scriptstyle{\pm3.2}$ & $90.7 \scriptstyle{\pm1.6}$
& $63.4 \scriptstyle{\pm3.5}$ & $90.7 \scriptstyle{\pm1.3}$
& $62.1 \scriptstyle{\pm2.3}$ & $90.4 \scriptstyle{\pm1.7}$ \\

JTT & \xmark/\bstar
& $58.7 \scriptstyle{\pm15.0}$ & $84.5 \scriptstyle{\pm4.0}$
& $58.7 \scriptstyle{\pm15.0}$ & $84.5 \scriptstyle{\pm4.0}$
& $57.8 \scriptstyle{\pm2.4}$ & $91.9 \scriptstyle{\pm0.2}$
& $57.7 \scriptstyle{\pm2.5}$ & $91.9 \scriptstyle{\pm0.2}$
& $57.6\scriptstyle{\pm2.7}$ & $91.9 \scriptstyle{\pm0.2}$ \\

CVaRDRO & \xmark/\bstar
& $37.1 \scriptstyle{\pm17.6}$ & $88.0 \scriptstyle{\pm4.2}$
& $37.1 \scriptstyle{\pm17.6}$ & $88.0 \scriptstyle{\pm4.2}$
& $57.1 \scriptstyle{\pm3.1}$ & $91.9 \scriptstyle{\pm0.2}$
& $55.8 \scriptstyle{\pm3.2}$ & $92.1 \scriptstyle{\pm0.3}$
& $56.2 \scriptstyle{\pm2.8}$ & $\underline{92.0} \scriptstyle{\pm0.2}$ \\

AFR & \xmark/\bstar
& $63.9 \scriptstyle{\pm13.1}$ & $82.9 \scriptstyle{\pm3.7}$
& $63.9 \scriptstyle{\pm13.1}$ & $82.9 \scriptstyle{\pm3.7}$
& $38.1 \scriptstyle{\pm11.8}$ & $\underline{92.0} \scriptstyle{\pm0.7}$
& $38.4 \scriptstyle{\pm11.7}$ & $\underline{92.0} \scriptstyle{\pm0.7}$
& $38.2 \scriptstyle{\pm11.8}$ & $92.0 \scriptstyle{\pm0.7}$ \\

DFR & \xmark/\cmark\cmark
& $63.9 \scriptstyle{\pm10.6}$ & $72.3 \scriptstyle{\pm6.0}$
& $63.4 \scriptstyle{\pm11.5}$ & $72.3 \scriptstyle{\pm6.0}$
& $66.5 \scriptstyle{\pm2.6}$ & $90.6 \scriptstyle{\pm0.6}$
& $64.0 \scriptstyle{\pm1.6}$ & $90.1 \scriptstyle{\pm1.6}$
& $\underline{64.1} \scriptstyle{\pm4.5}$ & $89.2 \scriptstyle{\pm1.1}$ \\
Group DRO & \cmark/\bstar
& $69.6 \scriptstyle{\pm7.6}$ & $80.9 \scriptstyle{\pm1.9}$
& $70.0 \scriptstyle{\pm7.5}$ & $80.9 \scriptstyle{\pm1.9}$
& $\underline{66.7} \scriptstyle{\pm1.9}$ & $89.5 \scriptstyle{\pm2.3}$
& $62.9 \scriptstyle{\pm4.0}$ & $89.3 \scriptstyle{\pm2.5}$
& $63.3 \scriptstyle{\pm3.4}$ & $89.4 \scriptstyle{\pm2.3}$ \\
IRM & \cmark/\bstar
& $\underline{71.4} \scriptstyle{\pm7.2}$ & $81.0 \scriptstyle{\pm1.7}$
& $71.3 \scriptstyle{\pm7.0}$ & $80.9 \scriptstyle{\pm1.8}$
& $\textbf{67.0} \scriptstyle{\pm1.6}$ & $90.0 \scriptstyle{\pm1.4}$
& $\underline{64.4} \scriptstyle{\pm1.6}$ & $90.3 \scriptstyle{\pm1.5}$
& $63.6 \scriptstyle{\pm3.2}$ & $89.3 \scriptstyle{\pm1.5}$ \\
LISA & \cmark/\bstar
& $39.5 \scriptstyle{\pm2.7}$ & $\textbf{89.4} \scriptstyle{\pm0.5}$
& $44.7 \scriptstyle{\pm3.1}$ & $\textbf{88.7} \scriptstyle{\pm0.4}$
& $61.7 \scriptstyle{\pm5.7}$ & $91.3 \scriptstyle{\pm0.7}$
& $59.2 \scriptstyle{\pm2.5}$ & $91.7 \scriptstyle{\pm0.2}$
& $59.4 \scriptstyle{\pm5.1}$ & $91.6 \scriptstyle{\pm0.6}$ \\

\rowcolor{teal!20}
LEIA (Ours) & \xmark/\bstar
& $\textbf{73.4} \scriptstyle{\pm1.9}$ & $78.7 \scriptstyle{\pm0.8}$
& $\textbf{73.3} \scriptstyle{\pm1.7}$ & $78.5 \scriptstyle{\pm1.2}$
& $66.1 \scriptstyle{\pm1.2}$ & $90.2 \scriptstyle{\pm0.7}$
& $\textbf{66.8} \scriptstyle{\pm1.9}$ & $90.1 \scriptstyle{\pm0.7}$
& $\textbf{65.2} \scriptstyle{\pm1.8}$ & $89.5 \scriptstyle{\pm0.4}$\\
\bottomrule[1pt]
\end{tabular}
}
\label{tab:partial_group_info}
\end{table*}

\setlength{\tabcolsep}{5pt}
\begin{table*}[!h]
\caption{
\textbf{Worst-group accuracy (WGA) and average accuracy on the test set with \underline{complete information of  group relevance.}} Baselines are divided into two types based on whether group labels are required for training the method. Rows denoted with asterisks (*) and CheXpert results reproduces baselines. Other results are reported from original papers of competitive baselines. Group Info (Train/Val) indicates whether group labels are used: \xmark{} = no group info used, \bstar{} = group info used for hyperparameter tuning and early stopping. Best and second-best values within each half are \textbf{bold} and \underline{underlined}, respectively. ``-'' indicates results not reported. All of our results are the mean $\pm$ standard deviation (in \%) averaged over three independent runs. 
}
\centering
\resizebox{\textwidth}{!}{%
\begin{tabular}{cc|cc|cc|cc|cc|cc}
\toprule[2pt]
\multirow{2}{*}{\textbf{Algorithm}} &
\textbf{Group Info} &
\multicolumn{2}{c}{\textsc{Waterbirds}} &
\multicolumn{2}{c}{\textsc{CelebA}} &
\multicolumn{2}{c}{\textsc{CivilComments}} &
\multicolumn{2}{c}{\textsc{MultiNLI}} &
\multicolumn{2}{c}{\textsc{CheXpert}} \\
\cmidrule(lr){3-12}
&
\textbf{(Train/Val)} &
\textbf{WGA} & \textbf{Avg Acc} &
\textbf{WGA} & \textbf{Avg Acc} &
\textbf{WGA} & \textbf{Avg Acc} &
\textbf{WGA} & \textbf{Avg Acc} &
\textbf{WGA} & \textbf{Avg Acc} \\
\midrule

ERM* & \xmark/\bstar
& $69.1\scriptstyle{\pm4.7}$ & $84.1\scriptstyle{\pm1.7}$
    &
    $57.6\scriptstyle{\pm0.8}$ & $\underline{95.0}\scriptstyle{\pm0.1}$
    &
    $63.2\scriptstyle{\pm1.2}$ & $85.4\scriptstyle{\pm0.2}$
    &
    $69.5\scriptstyle{\pm0.3}$ & $80.9\scriptstyle{\pm0.3}$
    &
    $41.7\scriptstyle{\pm3.4}$ & $\textbf{88.6}\scriptstyle{\pm0.7}$ \\
CRT* & \xmark/\bstar
& $76.3\scriptstyle{\pm0.8}$ & $89.2\scriptstyle{\pm0.1}$
& $70.4\scriptstyle{\pm}0.4$ & $94.1\scriptstyle{\pm0.1}$
& $68.5\scriptstyle{\pm0.0}$ & $83.0\scriptstyle{\pm0.0}$
& $65.4\scriptstyle{\pm0.1}$ & $80.2\scriptstyle{\pm0.0}$
& $74.0\scriptstyle{\pm0.2}$ & $79.1\scriptstyle{\pm0.1}$\\

ReWeightCRT* & \xmark/\bstar
& $76.3\scriptstyle{\pm0.2}$ & $89.4\scriptstyle{\pm0.3}$
& $71.1\scriptstyle{\pm0.5}$ & $94.2\scriptstyle{\pm0.1}$
& $68.2\scriptstyle{\pm0.4}$ & $83.4\scriptstyle{\pm0.0}$
& $65.3\scriptstyle{\pm0.1}$ & $80.2\scriptstyle{\pm0.0}$
& $73.9\scriptstyle{\pm0.2}$ & $79.0\scriptstyle{\pm0.1}$ \\

JTT & \xmark/\bstar
& $86.7$ & $93.3$
& $81.1$ & $88.0$
& $69.3$ & $\textbf{91.1}$
& $\underline{72.6}$ & $78.6$
& $60.4\scriptstyle{\pm4.8}$ & $75.2\scriptstyle{\pm0.8}$ \\

CVaRDRO* & \xmark/\bstar
& $75.5\scriptstyle{\pm}2.2$ 
& $89.9\scriptstyle{\pm}0.4$  &
 $62.2\scriptstyle{\pm}3.1$ &
 $95.1\scriptstyle{\pm}0.1$ &
 $68.7\scriptstyle{\pm}1.3$ &
 $83.5\scriptstyle{\pm}0.3$ &
$63.0\scriptstyle{\pm}1.5$ &
$75.1\scriptstyle{\pm}0.1$ &
$50.2\scriptstyle{\pm}1.8$ &
$73.7\scriptstyle{\pm}1.0$
 \\

CnC & \xmark/\bstar
& $88.5\scriptstyle{\pm}0.3$ & $90.9\scriptstyle{\pm}0.1$
& $\underline{88.8}\scriptstyle{\pm}0.9$ & $89.9\scriptstyle{\pm}0.5$
& $68.9\scriptstyle{\pm}2.1$ & $81.7\scriptstyle{\pm}0.5$
& $-$ & $-$
& $-$ & $-$ \\

AFR & \xmark/\bstar
& $\underline{90.4}\scriptstyle{\pm}1.1$ & $\textbf{94.2}\scriptstyle{\pm}1.2$
& $82.0\scriptstyle{\pm}0.5$ & $91.3\scriptstyle{\pm}0.3$
& $68.7\scriptstyle{\pm}0.6$ 
& $89.8\scriptstyle{\pm}0.6$
& $\textbf{73.4}\scriptstyle{\pm}0.6$ 
& $\underline{81.4}\scriptstyle{\pm}0.2$ 
& $-$ & $-$ \\

GIC & \xmark/\bstar
& $86.3\scriptstyle{\pm}0.1$ & $89.6\scriptstyle{\pm}1.3$
& $\textbf{89.4}\scriptstyle{\pm}0.2$ & $91.9\scriptstyle{\pm}0.1$
& $\underline{72.5}\scriptstyle{\pm}0.3$ & $90.0\scriptstyle{\pm}0.3$
& $-$ & $-$  
& $-$ & $-$\\

\rowcolor{teal!20} LEIA (Ours) & \xmark/\bstar
& $\textbf{90.7}\scriptstyle{\pm}0.2$ & $\underline{93.3}\scriptstyle{\pm}0.7$
& $85.0\scriptstyle{\pm}0.9$ & $\textbf{95.2}\scriptstyle{\pm}0.1$
& $\textbf{72.9}\scriptstyle{\pm}0.2$ &
$\underline{90.9}\scriptstyle{\pm}0.8$
& $69.6\scriptstyle{\pm}0.8$ 
& $\textbf{81.4}\scriptstyle{\pm}0.4$ 
& $\textbf{75.2}\scriptstyle{\pm}0.3$ & $\underline{79.6}\scriptstyle{\pm}0.3$ \\
\bottomrule[1pt]
\end{tabular}
}
\label{tab:complete_group_info}
\end{table*}

\noindent
\textbf{Partial Information of Group Relevance.} 
In Section \ref{sec:motivation}, we highlight the importance of considering unknown groups, as in many real-world scenarios, relevant subgroups will not be known \emph{a priori}. We create a training and evaluation setup with the \textsc{CivilComments} and \textsc{CheXpert} datasets.

The \textsc{CivilComments} dataset contains binary toxicity labels $y \in \{0,1\}$ and annotations for eight demographic attributes $\mathcal{A}$ = \{male, female, LGBTQ, Christian, Muslim, other religion, Black, White\}. In our case study setup, all methods are trained on the full training data. However, we define subgroups using only a \emph{single} attribute $a \in \mathcal{A}$. Specifically, we construct four training groups based on the cross-product of toxicity and the selected attribute:
\(
g \in \mathcal{G}_{\text{train}}(a)\) = \{\text{non-toxic no $a$}; \text{non-toxic has $a$};\ \text{toxic no $a$};\ \text{toxic has $a$}\}. All methods use only these four known groups for training and validation. At evaluation time, we assess performance across all demographic attributes. Concretely, we evaluate on the 16 test groups \(\mathcal{G}_{\text{test}} = \{(y, a_j) \mid y \in \{0,1\},\ a_j \in \mathcal{A}\}.\)
Groups with $a_j\neq a$ are \emph{unknown subgroups}. We focus on Black, LGBTQ, and Muslim as the known training attributes, as they are realistic in toxic content moderation and exhibit poor worst-group performance under ERM \citep{duchene2023benchmarktoxiccommentclassification}.

The \textsc{CheXpert} dataset contains binary labels $y\in\{0,1\}$ indicating the presence of a medical condition, with race $\in\{\text{White},\text{Black},\text{Other}\}$ and gender $\in\{\text{Male},\text{Female}\}$. As above, all methods are trained on the full dataset, but subgroups are defined using only one demographic attribute at a time. We consider: (i) \emph{Race-only} training with
\( \mathcal{G}_{\text{train}}(\text{race})=\{y\times\text{race}\}
\) and
(ii) \emph{Gender-only} training with \( \mathcal{G}_{\text{train}}(\text{gender})=\{y\times\text{gender}\}
\). Only the corresponding known groups are used for training and validation. Evaluation is performed over the full intersection,
\(\mathcal{G}_{\text{test}}=\{(y,\text{race},\text{gender})\},\)
yielding 12 test groups. Groups involving the attribute not used during training are \emph{unknown subgroups}. For example, in race-only training, worst-group performance is computed over all race$\times$gender$\times$label groups. These settings reflect realistic scenarios where only partial demographic information is available during training, while deployment requires robustness across full demographic intersections.

In Table \ref{tab:partial_group_info}, we show that LEIA has the best WGA across all groups in four out of five settings. On average, LEIA has a 1.1\%-2.4\% increase in WGA from the second highest baseline. We also observe that LEIA has far more stable performance with lower standard deviations than other baselines, especially with the \textsc{CheXpert} results. Notably, we observe how competitive baselines like AFR, DFR, and GroupDRO, perform considerably worse than in the setting with complete information of group relevance. 
\newpage
\noindent
\textbf{Complete Information of Group Relevance.}  As this is the typical setting used for assessing performance of algorithms that do not use group information for training, we show that LEIA's performance remains strong when comparing against state-of-the-art baselines (see Table \ref{tab:complete_group_info}). LEIA achieves the highest WGA on \textsc{Waterbirds}, \textsc{CivilComments}, and \textsc{CheXpert} datasets in this setting where we use validation WGA for hyperparameter tuning and early stopping. Again, we observe impressive subgroup performance without significant trade-offs to average performance. In the \textsc{CelebA} dataset, we achieve 85.0\% WGA while maintaining 95.2\% average accuracy. While CnC and GIC outperform LEIA in WGA by 3.8-4.4\%, this comes at a drop in overall accuracy of 3.3\%-5.3\%. 

We note that if oracle group labels are fully available, one would use Group DRO or other methods that train with these oracle labels, as they outperform methods that do not use group labels for training (see Appendix \ref{app:complete_group_oracle} for results). We emphasize that such oracle knowledge of group relevancy is unrealistic. But in assessing performance with comparable baselines, LEIA exhibits strong performance.  LEIA's  performance in all three settings is even more impressive given how lightweight and fast it is compared to baselines. 

\subsection{LEIA is Fast and Parameter-Efficient}
In Figure \ref{fig:runtime}, we visualize the training time required by LEIA and baselines on four datasets, as other state-of-the-art baselines have not shown results on CheXpert. LEIA has a 10.8\% average overhead compared to ERM, with AFR following at a 11.7\% overhead. Other baselines are heavier: DPE has a 71.6\% overhead; JTT has a 203.1\% overhead; GIC has a 407.9\% overhead. 
We also provide details and a table in Appendix \ref{appendix:runtime} for readability.   

\begin{figure}[!h]
    \centering
    \includegraphics[width=\linewidth]{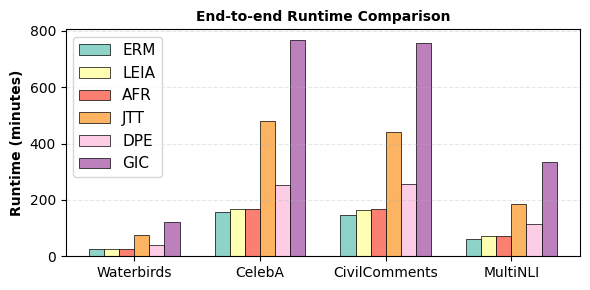}
    \caption{\textbf{Training time} (in minutes) across
different datasets. LEIA only incurs
a small overhead compared to standard ERM and is
substantially less expensive than recent methods like JTT, DPE, and GIC.}
    \label{fig:runtime}
    \vspace{-10pt}
\end{figure}

Table \ref{tab:param_counts} highlights that LEIA is a particularly attractive in settings where computational budget, memory, or training stability are key constraints. Unlike competing methods that update parameters proportional to the embedding dimension or even the full backbone, LEIA updates only a low-rank set of class-specific parameters, scaling as $k \times C$. 

\begin{table}[!h]
\small
\centering
\setlength{\tabcolsep}{4pt}
\renewcommand{\arraystretch}{0.8}
\caption{\textbf{Number of trainable parameters} updated during the second stage training for comparative baselines. 
$k$ = LEIA rank parameter, $d$ = embedding dimension, $C$ =  number of classes, $N$ = DPE number of prototypes per class, and $P_\text{backbone}$ = total parameters of base model backbone. For LEIA, we include the \textit{maximum} number of parameters updated across all runs, since the $k$ parameter is tuned for each seed.}
\label{tab:param_counts}
\resizebox{\columnwidth}{!}{%
\begin{tabular}{lcccc}
\toprule
\textbf{\footnotesize Algorithm} 
& \textsc{Waterbirds} 
& \textsc{CelebA} 
& \textsc{CivilComments} 
& \textsc{MultiNLI} \\
\midrule
\rowcolor{teal!20} LEIA \; $(k \times C)$ 
& $32$ 
& $104$ 
& $18$ 
& $9$ \\

AFR \; $(d \times C)$
& $4{,}098$ 
& $4{,}098$ 
& $1{,}456$ 
& $2{,}184$ \\

DPE \; $(N \times d \times C)$
& $61{,}470$ 
& $61{,}470$ 
& $21{,}840$ 
& $32{,}760$ \\

JTT \; $(P_\text{backbone})$
& $23{,}512{,}130$ 
& $23{,}512{,}130$ 
& $109{,}483{,}778$ 
& $109{,}483{,}778$ \\

GIC \; $(2 \times P_\text{backbone})$
& $47{,}024{,}260$ 
& $47{,}024{,}260$ 
& $218{,}967{,}556$ 
& $218{,}967{,}556$ \\
\bottomrule
\end{tabular}
}
    \vspace{-12pt}
\end{table}

\subsection{LEIA is Robust to Hyperparameters}
We study the robustness of LEIA to the choice of the hyperparameter $k$. Furthermore in Appendix \ref{app:ablations}, we present additional results of ablating the $\gamma$ paramater and dataset splitting ratios. 

\begin{figure}[!t]
    \centering
    \includegraphics[width=\linewidth]{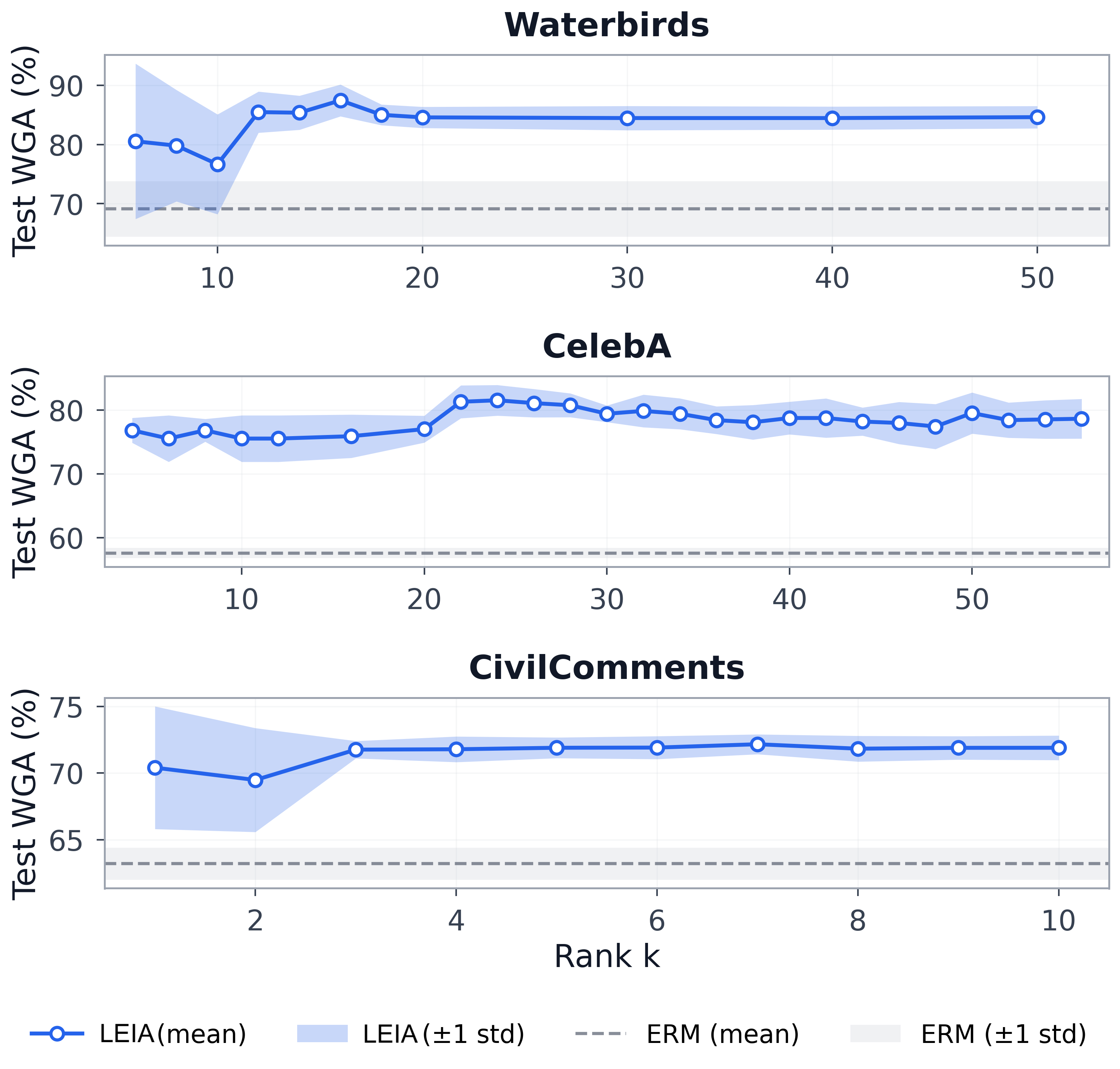}
    \caption{\textbf{Robustness of LEIA to rank $k$ parameter:} We show how LEIA has strong test WGA performance throughout the $k$ tuning region identified in Figure \ref{fig:low_rank_errors}. Performance is averaged across 3 seeds.}
    \label{fig:rank_robustness}
    \vspace{-10pt}
\end{figure}

We find that throughout the range of $k$ values highlighted in Figure \ref{fig:low_rank_errors}, LEIA achieves competitive WGA performance across datasets. For \textsc{Waterbirds} and \textsc{CivilComments} datasets, we can see that there is more variance in lower ranks, suggesting that there is not enough cumulative variance captured with the lower error directions across all seeds, leading to less consistent performance (see Figure \ref{fig:rank_robustness}) However, as performance stabilizes, variation is limited to less than $1\%$ across ranks. The $\gamma$ parameter shows similar robustness (see Figure \ref{fig:gamma_robustness}). Taken together, the rank and $\gamma$ robustness analyses show that LEIA has two relatively forgiving hyperparameters, making it substantially easier to use than methods that require precise hyperparameter tuning.

\section{Conclusion}

We study the realistic setting of group robustness when subpopulations are unknown at training time and show that robustness guarantees over known groups do not hold on unknown subpopulations. We propose \emph{Low-rank Error Informed Adaptation (LEIA)}, a simple two-stage method that identifies low-dimensional subspace in the representation space where model errors concentrate and restricts adaptation to this error-informed subspace via a low-rank adjustment to the classifier logits. Across five benchmarks, LEIA consistently improves worst-group performance under (1) truly no knowledge of subgroup relevance, (2) partial knowledge of subgroup relevance, and (3) full knowledge of subgroup relevance. Beyond empirical gains, our results show that evaluating robustness methods solely under full subgroup knowledge substantially overestimates real-world reliability. Thus, we call for \emph{partial knowledge of subgroup relevancy to be treated as a first-class evaluation regime} in group robustness research.

\section*{Impact Statement}

This work introduces Low-rank Error Informed Adaptation (LEIA), a method that furthers the efficiency frontier of group robustness methods and improves group robustness in settings where subpopulations most affected by model failures are unlabeled, unknown, or only partially observed at training time. LEIA aims to mitigate systematic failures arising from the effects of subpopulation shift through a low-rank error correction term, all without requiring explicit subgroup annotations. As machine learning models are deployed in high-stakes scenarios for heterogeneous populations, it is critical that models are performant across subgroups. The particular societal impact of our work is the consideration of realistic deployment scenarios where group attributes are not fully available to train models and oracle knowledge of relevant subgroups is not present. While LEIA shows impressive performance across experiments, it does not eliminate all biases inherent in data or
model architectures. Further research is necessary to assess LEIA's effectiveness across diverse datasets, out of distribution settings, and real-world model pipelines. 

\bibliography{example_paper}

@inproceedings{wilson2024gender,
  title={Gender, race, and intersectional bias in resume screening via language model retrieval},
  author={Wilson, Kyra and Caliskan, Aylin},
  booktitle={Proceedings of the AAAI/ACM Conference on AI, Ethics, and Society},
  volume={7},
  pages={1578--1590},
  year={2024}
}

@inproceedings{raghavan2020mitigating,
  title={Mitigating bias in algorithmic hiring: Evaluating claims and practices},
  author={Raghavan, Manish and Barocas, Solon and Kleinberg, Jon and Levy, Karen},
  booktitle={Proceedings of the 2020 conference on fairness, accountability, and transparency},
  pages={469--481},
  year={2020}
}

@inproceedings{sap2019risk,
  title={The risk of racial bias in hate speech detection},
  author={Sap, Maarten and Card, Dallas and Gabriel, Saadia and Choi, Yejin and Smith, Noah A},
  booktitle={Proceedings of the 57th annual meeting of the association for computational linguistics},
  pages={1668--1678},
  year={2019}
}

@article{hu2022lora,
  title={Lora: Low-rank adaptation of large language models.},
  author={Hu, Edward J and Shen, Yelong and Wallis, Phillip and Allen-Zhu, Zeyuan and Li, Yuanzhi and Wang, Shean and Wang, Lu and Chen, Weizhu and others},
  journal={ICLR},
  volume={1},
  number={2},
  pages={3},
  year={2022}
}

@article{zhao2025bias,
  title={Bias Is a Subspace, Not a Coordinate: A Geometric Rethinking of Post-hoc Debiasing in Vision-Language Models},
  author={Zhao, Dachuan and Li, Weiyue and Shen, Zhenda and Qiu, Yushu and Xu, Bowen and Chen, Haoyu and Chen, Yongchao},
  journal={arXiv preprint arXiv:2511.18123},
  year={2025}
}

@inproceedings{jang2025target,
  title={Target Bias Is All You Need: Zero-Shot Debiasing of Vision-Language Models with Bias Corpus},
  author={Jang, Taeuk and Jung, Hoin and Wang, Xiaoqian},
  booktitle={Proceedings of the IEEE/CVF International Conference on Computer Vision},
  pages={1935--1946},
  year={2025}
}

@article{chuang2023debiasing,
  title={Debiasing vision-language models via biased prompts},
  author={Chuang, Ching-Yao and Jampani, Varun and Li, Yuanzhen and Torralba, Antonio and Jegelka, Stefanie},
  journal={arXiv preprint arXiv:2302.00070},
  year={2023}
}

@article{gerych2024bendvlm,
  title={Bendvlm: Test-time debiasing of vision-language embeddings},
  author={Gerych, Walter and Zhang, Haoran and Hamidieh, Kimia and Pan, Eileen and Sharma, Maanas K and Hartvigsen, Tom and Ghassemi, Marzyeh},
  journal={Advances in Neural Information Processing Systems},
  volume={37},
  pages={62480--62502},
  year={2024}
}

@article{hirota2024saner,
  title={Saner: Annotation-free societal attribute neutralizer for debiasing clip},
  author={Hirota, Yusuke and Chen, Min-Hung and Wang, Chien-Yi and Nakashima, Yuta and Wang, Yu-Chiang Frank and Hachiuma, Ryo},
  journal={arXiv preprint arXiv:2408.10202},
  year={2024}
}

@article{jung2024unified,
  title={A unified debiasing approach for vision-language models across modalities and tasks},
  author={Jung, Hoin and Jang, Taeuk and Wang, Xiaoqian},
  journal={Advances in Neural Information Processing Systems},
  volume={37},
  pages={21034--21058},
  year={2024}
}

@inproceedings{zhang2025joint,
  title={Joint Vision-Language Social Bias Removal for CLIP},
  author={Zhang, Haoyu and Guo, Yangyang and Kankanhalli, Mohan},
  booktitle={Proceedings of the Computer Vision and Pattern Recognition Conference},
  pages={4246--4255},
  year={2025}
}

@inproceedings{hashimoto2018fairness,
  title={Fairness without demographics in repeated loss minimization},
  author={Hashimoto, Tatsunori and Srivastava, Megha and Namkoong, Hongseok and Liang, Percy},
  booktitle={International Conference on Machine Learning},
  pages={1929--1938},
  year={2018},
  organization={PMLR}
}

@misc{rudner2024mindgapimprovingrobustness,
      title={Mind the GAP: Improving Robustness to Subpopulation Shifts with Group-Aware Priors}, 
      author={Tim G. J. Rudner and Ya Shi Zhang and Andrew Gordon Wilson and Julia Kempe},
      year={2024},
      eprint={2403.09869},
      archivePrefix={arXiv},
      primaryClass={stat.ML},
      url={https://arxiv.org/abs/2403.09869}, 
}

@misc{qiu2023simplefastgrouprobustness,
      title={Simple and Fast Group Robustness by Automatic Feature Reweighting}, 
      author={Shikai Qiu and Andres Potapczynski and Pavel Izmailov and Andrew Gordon Wilson},
      year={2023},
      eprint={2306.11074},
      archivePrefix={arXiv},
      primaryClass={cs.LG},
      url={https://arxiv.org/abs/2306.11074}, 
}

@article{borkan2019nuanced,
  title={{Nuanced Metrics For Measuring Unintended Bias With Real Data For Text Classification}},
  author={Borkan, Daniel and Dixon, Lucas and Sorensen, Jeffrey and Thain, Nithum and Vasserman, Lucy},
  journal={Companion proceedings of the 2019 world wide web conference},
  pages={491--500},
  year={2019}
}

@article{devlin2018bert,
  title={{BERT: Pre-training Of Deep Bidirectional Transformers For Language Understanding}},
  author={Devlin, Jacob and Chang, Ming-Wei and Lee, Kenton and Toutanova, Kristina},
  journal={arXiv preprint arXiv:1810.04805},
  year={2018}
}

@article{geirhos2020shortcut,
  title={{Shortcut Learning In Deep Neural Networks}},
  author={Geirhos, Robert and Jacobsen, J{\"o}rn-Henrik and Michaelis, Claudio and Zemel, Richard and Brendel, Wieland and Bethge, Matthias and Wichmann, Felix A},
  journal={Nature Machine Intelligence},
  volume={2},
  number={11},
  pages={665--673},
  year={2020},
  publisher={Nature Publishing Group}
}

@article{he2016deep,
  title={{Deep Residual Learning For Image Recognition}},
  author={He, Kaiming and Zhang, Xiangyu and Ren, Shaoqing and Sun, Jian},
  journal={Proceedings of the IEEE conference on computer vision and pattern recognition},
  pages={770--778},
  year={2016}
}

@article{idrissi2021simple,
  title={{Simple Data Balancing Achieves Competitive Worst-Group-Accuracy}},
  author={Idrissi, Badr Youbi and Arjovsky, Martin and Pezeshki, Mohammad and Lopez-Paz, David},
  journal={arXiv preprint arXiv:2110.14503},
  year={2021}
}

@article{kirichenko2022dfr,
  title={{Last Layer Re-Training Is Sufficient For Robustness To Spurious Correlations}},
  author={Polina Kirichenko and Pavel Izmailov and Andrew Gordon Wilson},
  journal={Preprint arXiv 2204.02937v1},
  year={2022},
}

@article{liu2021jtt,
  title={{Just Train Twice: Improving Group Robustness Without Training Group Information}},
  author={Evan Zheran Liu and Behzad Haghgoo and Annie S. Chen and Aditi Raghunathan
  and Pang Wei Koh and Shiori Sagawa and Percy Liang and Chelsea Finn},
  journal={International Conference on Machine Learning (ICML)},
  year={2021},
}

@article{sagawa2020GDRO,
  title={{Distributionally Robust Neural Networks For Group Shifts: On The Importance Of Regularization For Worst-Case Generalization}},
  author={Shiori Sagawa and Pang Wei Koh and Tatsunori B. Hashimoto and Percy Liang},
  journal={International Conference on Learning Representations (ICLR)},
  year={2020},
}

@article{sohoni2020george, 
  title={{No Subclass Left Behind: Fine-Grained Robustness In Coarse-Grained Classification Problems}},
  author={Sohoni, Nimit and Dunnmon, Jared and Angus, Geoffrey and Gu, Albert and R\'{e}, Christopher},
  journal={Neural Information Processing Systems (NeurIPS)},
  pages={19339--19352},
  volume={33},
  year={2020}
}

@article{jain2024improving,
  title={Improving subgroup robustness via data selection},
  author={Jain, Saachi and Hamidieh, Kimia and Georgiev, Kristian and Ilyas, Andrew and Ghassemi, Marzyeh and Madry, Aleksander},
  journal={Advances in Neural Information Processing Systems},
  volume={37},
  pages={94490--94511},
  year={2024}
}

@article{labonte2023towards,
  title={Towards last-layer retraining for group robustness with fewer annotations},
  author={LaBonte, Tyler and Muthukumar, Vidya and Kumar, Abhishek},
  journal={Advances in Neural Information Processing Systems},
  volume={36},
  pages={11552--11579},
  year={2023}
}

@inproceedings{
ahmed2021systematic,
title={Systematic generalisation with group invariant predictions},
author={Faruk Ahmed and Yoshua Bengio and Harm van Seijen and Aaron Courville},
booktitle={International Conference on Learning Representations},
year={2021},
url={https://openreview.net/forum?id=b9PoimzZFJ}
}

@article{zech2018variable,
  title={{Variable Generalization Performance Of A Deep Learning Model To Detect Pneumonia In Chest Radiographs: A Cross-Sectional Study}},
  author={Zech, John R and Badgeley, Marcus A and Liu, Manway and Costa, Anthony B and Titano, Joseph J and Oermann, Eric Karl},
  journal={PLoS medicine},
  volume={15},
  number={11},
  pages={e1002683},
  year={2018},
  publisher={Public Library of Science San Francisco, CA USA}
}

@article{zhang2022cnc,
  title={{Correct-n-Contrast: A Contrastive Approach For Improving Robustness To Spurious Correlations}},
  author={Michael Zhang and Nimit S. Sohoni and Hongyang R. Zhang and Chelsea Finn and Christopher R\'{e}},
  journal={Preprint arXiv 2203.01517v1},
  year={2022},
}

@article{russakovsky2015imagenet,
  title={{ImageNet Large Scale Visual Recognition Challenge}},
  author={Russakovsky, Olga and Deng, Jia and Su, Hao and Krause, Jonathan and Satheesh, Sanjeev and Ma, Sean and Huang, Zhiheng and Karpathy, Andrej and Khosla, Aditya and Bernstein, Michael and others},
  journal={International journal of computer vision},
  volume={115},
  number={3},
  pages={211--252},
  year={2015},
  publisher={Springer}
}

@misc{duchene2023benchmarktoxiccommentclassification,
      title={A benchmark for toxic comment classification on Civil Comments dataset}, 
      author={Corentin Duchene and Henri Jamet and Pierre Guillaume and Reda Dehak},
      year={2023},
      eprint={2301.11125},
      archivePrefix={arXiv},
      primaryClass={cs.CL},
      url={https://arxiv.org/abs/2301.11125}, 
}

@misc{oakdenrayner2019hiddenstratificationcausesclinically,
      title={Hidden Stratification Causes Clinically Meaningful Failures in Machine Learning for Medical Imaging}, 
      author={Luke Oakden-Rayner and Jared Dunnmon and Gustavo Carneiro and Christopher Ré},
      year={2019},
      eprint={1909.12475},
      archivePrefix={arXiv},
      primaryClass={cs.LG},
      url={https://arxiv.org/abs/1909.12475}, 
}

@misc{arjovsky2017principledmethodstraininggenerative,
      title={Towards Principled Methods for Training Generative Adversarial Networks}, 
      author={Martin Arjovsky and Léon Bottou},
      year={2017},
      eprint={1701.04862},
      archivePrefix={arXiv},
      primaryClass={stat.ML},
      url={https://arxiv.org/abs/1701.04862}, 
}

@misc{yao2022improvingoutofdistributionrobustnessselective,
      title={Improving Out-of-Distribution Robustness via Selective Augmentation}, 
      author={Huaxiu Yao and Yu Wang and Sai Li and Linjun Zhang and Weixin Liang and James Zou and Chelsea Finn},
      year={2022},
      eprint={2201.00299},
      archivePrefix={arXiv},
      primaryClass={cs.LG},
      url={https://arxiv.org/abs/2201.00299}, 
}

@misc{han2024improvinggrouprobustnessspurious,
      title={Improving Group Robustness on Spurious Correlation Requires Preciser Group Inference}, 
      author={Yujin Han and Difan Zou},
      year={2024},
      eprint={2404.13815},
      archivePrefix={arXiv},
      primaryClass={cs.LG},
      url={https://arxiv.org/abs/2404.13815}, 
}

@misc{mccoy2024dataadaptiveidentificationeffectmodifiers,
      title={Data-Adaptive Identification of Effect Modifiers through Stochastic Shift Interventions and Cross-Validated Targeted Learning}, 
      author={David McCoy and Wenxin Zhang and Alan Hubbard and Mark van der Laan and Alejandro Schuler},
      year={2024},
      eprint={2406.10792},
      archivePrefix={arXiv},
      primaryClass={stat.ME},
      url={https://arxiv.org/abs/2406.10792}, 
}

@article{martin2017assessment,
  title={An assessment of the impact of pharmacogenomics on health disparities: a systematic literature review},
  author={Martin, Antony and Downing, Jennifer and Maden, Michelle and Fleeman, Nigel and Alfirevic, Ana and Haycox, Alan and Pirmohamed, Munir},
  journal={Pharmacogenomics},
  volume={18},
  number={16},
  pages={1541--1550},
  year={2017},
  publisher={Taylor \& Francis}
}

@article{rossello2025beta,
  title={Beta-blockers after myocardial infarction: effects according to sex in the REBOOT trial},
  author={Rossello, Xavier and Dominguez-Rodriguez, Alberto and Latini, Roberto and S{\'a}nchez, Pedro L and Raposeiras-Roub{\'\i}n, Sergio and Anguita, Manuel and Barrab{\'e}s, Jos{\'e} A and Grigis, Giulietta and Owen, Ruth and Pocock, Stuart and others},
  journal={European Heart Journal},
  pages={ehaf673},
  year={2025},
  publisher={Oxford University Press UK}
}

@article{mas2023representativeness,
  title={Representativeness in randomised clinical trials supporting acute coronary syndrome guidelines},
  author={Mas-Llado, Caterina and Gonzalez-Del-Hoyo, Maribel and Siquier-Padilla, Joan and Blaya-Pe{\~n}a, Laura and Coughlan, JJ and Garc{\'\i}a de la Villa, Bernardo and Peral, Vicente and Rossello, Xavier},
  journal={European Heart Journal-Quality of Care and Clinical Outcomes},
  volume={9},
  number={8},
  pages={796--805},
  year={2023},
  publisher={Oxford University Press}
}

@article{vogel2021lancet,
  title={The Lancet women and cardiovascular disease Commission: reducing the global burden by 2030},
  author={Vogel, Birgit and Acevedo, Monica and Appelman, Yolande and Merz, C Noel Bairey and Chieffo, Alaide and Figtree, Gemma A and Guerrero, Mayra and Kunadian, Vijay and Lam, Carolyn SP and Maas, Angela HEM and others},
  journal={The Lancet},
  volume={397},
  number={10292},
  pages={2385--2438},
  year={2021},
  publisher={Elsevier}
}

@article{MUSAOGULLARI2025214,
title = {Space for improvement: ZIP codes should not determine cardiovascular longevity, a scoping review},
journal = {Trends in Cardiovascular Medicine},
volume = {35},
number = {4},
pages = {214-218},
year = {2025},
issn = {1050-1738},
doi = {https://doi.org/10.1016/j.tcm.2024.12.005},
url = {https://www.sciencedirect.com/science/article/pii/S1050173824001129},
author = {Aysenur Musaogullari and Jeffrey Moorhead and Alejandro Plana and Amber Johnson}
}

@misc{liu2019largescalelongtailedrecognitionopen,
      title={Large-Scale Long-Tailed Recognition in an Open World}, 
      author={Ziwei Liu and Zhongqi Miao and Xiaohang Zhan and Jiayun Wang and Boqing Gong and Stella X. Yu},
      year={2019},
      eprint={1904.05160},
      archivePrefix={arXiv},
      primaryClass={cs.CV},
      url={https://arxiv.org/abs/1904.05160}, 
}

@misc{kang2020decouplingrepresentationclassifierlongtailed,
      title={Decoupling Representation and Classifier for Long-Tailed Recognition}, 
      author={Bingyi Kang and Saining Xie and Marcus Rohrbach and Zhicheng Yan and Albert Gordo and Jiashi Feng and Yannis Kalantidis},
      year={2020},
      eprint={1910.09217},
      archivePrefix={arXiv},
      primaryClass={cs.CV},
      url={https://arxiv.org/abs/1910.09217}, 
}

@article{kaplow1999conflict,
  title={The conflict between notions of fairness and the Pareto principle},
  author={Kaplow, Louis and Shavell, Steven},
  journal={American Law and Economics Review},
  volume={1},
  number={1},
  pages={63--77},
  year={1999},
  publisher={Oxford University Press}
}

@article{10.1257/pandp.20181018,
Author = {Kleinberg, Jon and Ludwig, Jens and Mullainathan, Sendhil and Rambachan, Ashesh},
Title = {Algorithmic Fairness},
Journal = {AEA Papers and Proceedings},
Volume = {108},
Year = {2018},
Month = {May},
Pages = {22–27},
DOI = {10.1257/pandp.20181018},
URL = {https://www.aeaweb.org/articles?id=10.1257/pandp.20181018}}

@article{zhang2017mixup,
  title={mixup: Beyond empirical risk minimization},
  author={Zhang, Hongyi and Cisse, Moustapha and Dauphin, Yann N and Lopez-Paz, David},
  journal={arXiv preprint arXiv:1710.09412},
  year={2017}
}

@article{arjovsky2019invariant,
  title={Invariant risk minimization},
  author={Arjovsky, Martin and Bottou, L{\'e}on and Gulrajani, Ishaan and Lopez-Paz, David},
  journal={arXiv preprint arXiv:1907.02893},
  year={2019}
}

@article{lahoti2020fairness,
  title={Fairness without demographics through adversarially reweighted learning},
  author={Lahoti, Preethi and Beutel, Alex and Chen, Jilin and Lee, Kang and Prost, Flavien and Thain, Nithum and Wang, Xuezhi and Chi, Ed},
  journal={Advances in neural information processing systems},
  volume={33},
  pages={728--740},
  year={2020}
}

@article{kallus2022assessing,
  title={Assessing algorithmic fairness with unobserved protected class using data combination},
  author={Kallus, Nathan and Mao, Xiaojie and Zhou, Angela},
  journal={Management Science},
  volume={68},
  number={3},
  pages={1959--1981},
  year={2022},
  publisher={INFORMS}
}

@InProceedings{pmlr-v81-buolamwini18a,
  title = 	 {Gender Shades: Intersectional Accuracy Disparities in Commercial Gender Classification},
  author = 	 {Buolamwini, Joy and Gebru, Timnit},
  booktitle = 	 {Proceedings of the 1st Conference on Fairness, Accountability and Transparency},
  pages = 	 {77--91},
  year = 	 {2018},
  editor = 	 {Friedler, Sorelle A. and Wilson, Christo},
  volume = 	 {81},
  series = 	 {Proceedings of Machine Learning Research},
  month = 	 {23--24 Feb},
  publisher =    {PMLR},
  url = 	 {https://proceedings.mlr.press/v81/buolamwini18a.html}
}

@article{yang2024limits,
  title={The limits of fair medical imaging AI in real-world generalization},
  author={Yang, Yuzhe and Zhang, Haoran and Gichoya, Judy W and Katabi, Dina and Ghassemi, Marzyeh},
  journal={Nature Medicine},
  volume={30},
  number={10},
  pages={2838--2848},
  year={2024},
  publisher={Nature Publishing Group US New York}
}

@InProceedings{pmlr-v119-martinez20a,
  title = 	 {Minimax Pareto Fairness: A Multi Objective Perspective},
  author =       {Martinez, Natalia and Bertran, Martin and Sapiro, Guillermo},
  booktitle = 	 {Proceedings of the 37th International Conference on Machine Learning},
  pages = 	 {6755--6764},
  year = 	 {2020},
  editor = 	 {III, Hal Daumé and Singh, Aarti},
  volume = 	 {119},
  series = 	 {Proceedings of Machine Learning Research},
  month = 	 {13--18 Jul},
  publisher =    {PMLR},
  url = 	 {https://proceedings.mlr.press/v119/martinez20a.html}
}

@inproceedings{yang2024mitigating,
  title={On mitigating shortcut learning for fair chest x-ray classification under distribution shift},
  author={Yang, Yuzhe and Zhang, Haoran and Katabi, Dina and Ghassemi, Marzyeh},
  booktitle={NeurIPS 2023 Workshop on Distribution Shifts: New Frontiers with Foundation Models},
  year={2024}
}

@misc{santurkar2020breedsbenchmarkssubpopulationshift,
      title={BREEDS: Benchmarks for Subpopulation Shift}, 
      author={Shibani Santurkar and Dimitris Tsipras and Aleksander Madry},
      year={2020},
      eprint={2008.04859},
      archivePrefix={arXiv},
      primaryClass={cs.CV},
      url={https://arxiv.org/abs/2008.04859}, 
}

@InProceedings{pmlr-v108-awasthi20a,
  title = 	 {Equalized odds postprocessing under imperfect group information},
  author =       {Awasthi, Pranjal and Kleindessner, Matth\"aus and Morgenstern, Jamie},
  booktitle = 	 {Proceedings of the Twenty Third International Conference on Artificial Intelligence and Statistics},
  pages = 	 {1770--1780},
  year = 	 {2020},
  editor = 	 {Chiappa, Silvia and Calandra, Roberto},
  volume = 	 {108},
  series = 	 {Proceedings of Machine Learning Research},
  month = 	 {26--28 Aug},
  publisher =    {PMLR},
  url = 	 {https://proceedings.mlr.press/v108/awasthi20a.html}
}

@article{krumpal2013determinants,
  title={Determinants of social desirability bias in sensitive surveys: a literature review},
  author={Krumpal, Ivar},
  journal={Quality \& quantity},
  volume={47},
  number={4},
  pages={2025--2047},
  year={2013},
  publisher={Springer}
}

@misc{duchi2020learningmodelsuniformperformance,
      title={Learning Models with Uniform Performance via Distributionally Robust Optimization}, 
      author={John Duchi and Hongseok Namkoong},
      year={2020},
      eprint={1810.08750},
      archivePrefix={arXiv},
      primaryClass={stat.ML},
      url={https://arxiv.org/abs/1810.08750}, 
}

@article{obermeyer2019dissecting,
  title={Dissecting racial bias in an algorithm used to manage the health of populations},
  author={Obermeyer, Ziad and Powers, Brian and Vogeli, Christine and Mullainathan, Sendhil},
  journal={Science},
  volume={366},
  number={6464},
  pages={447--453},
  year={2019},
  publisher={American Association for the Advancement of Science}
}

@InProceedings{pmlr-v80-kearns18a,
  title = 	 {Preventing Fairness Gerrymandering: Auditing and Learning for Subgroup Fairness},
  author =       {Kearns, Michael and Neel, Seth and Roth, Aaron and Wu, Zhiwei Steven},
  booktitle = 	 {Proceedings of the 35th International Conference on Machine Learning},
  pages = 	 {2564--2572},
  year = 	 {2018},
  editor = 	 {Dy, Jennifer and Krause, Andreas},
  volume = 	 {80},
  series = 	 {Proceedings of Machine Learning Research},
  month = 	 {10--15 Jul},
  publisher =    {PMLR},
  url = 	 {https://proceedings.mlr.press/v80/kearns18a.html}
}

@misc{to2025diverseprototypicalensemblesimprove,
      title={Diverse Prototypical Ensembles Improve Robustness to Subpopulation Shift}, 
      author={Minh Nguyen Nhat To and Paul F RWilson and Viet Nguyen and Mohamed Harmanani and Michael Cooper and Fahimeh Fooladgar and Purang Abolmaesumi and Parvin Mousavi and Rahul G. Krishnan},
      year={2025},
      eprint={2505.23027},
      archivePrefix={arXiv},
      primaryClass={cs.LG},
      url={https://arxiv.org/abs/2505.23027}, 
}

@misc{yang2023changehardcloserlook,
      title={Change is Hard: A Closer Look at Subpopulation Shift}, 
      author={Yuzhe Yang and Haoran Zhang and Dina Katabi and Marzyeh Ghassemi},
      year={2023},
      eprint={2302.12254},
      archivePrefix={arXiv},
      primaryClass={cs.LG},
      url={https://arxiv.org/abs/2302.12254}, 
}

@misc{irvin2019chexpertlargechestradiograph,
      title={CheXpert: A Large Chest Radiograph Dataset with Uncertainty Labels and Expert Comparison}, 
      author={Jeremy Irvin and Pranav Rajpurkar and Michael Ko and Yifan Yu and Silviana Ciurea-Ilcus and Chris Chute and Henrik Marklund and Behzad Haghgoo and Robyn Ball and Katie Shpanskaya and Jayne Seekins and David A. Mong and Safwan S. Halabi and Jesse K. Sandberg and Ricky Jones and David B. Larson and Curtis P. Langlotz and Bhavik N. Patel and Matthew P. Lungren and Andrew Y. Ng},
      year={2019},
      eprint={1901.07031},
      archivePrefix={arXiv},
      primaryClass={cs.CV},
      url={https://arxiv.org/abs/1901.07031}, 
}

@misc{tsipras2019robustnessoddsaccuracy,
      title={Robustness May Be at Odds with Accuracy}, 
      author={Dimitris Tsipras and Shibani Santurkar and Logan Engstrom and Alexander Turner and Aleksander Madry},
      year={2019},
      eprint={1805.12152},
      archivePrefix={arXiv},
      primaryClass={stat.ML},
      url={https://arxiv.org/abs/1805.12152}, 
}

@misc{salaudeen2025aggregationhidesoutofdistributiongeneralization,
      title={Aggregation Hides Out-of-Distribution Generalization Failures from Spurious Correlations}, 
      author={Olawale Salaudeen and Haoran Zhang and Kumail Alhamoud and Sara Beery and Marzyeh Ghassemi},
      year={2025},
      eprint={2510.24884},
      archivePrefix={arXiv},
      primaryClass={cs.LG},
      url={https://arxiv.org/abs/2510.24884}, 
}

@article{balashankar2019fair,
  title={What is fair? exploring pareto-efficiency for fairness constrained classifiers},
  author={Balashankar, Ananth and Lees, Alyssa and Welty, Chris and Subramanian, Lakshminarayanan},
  journal={arXiv preprint arXiv:1910.14120},
  year={2019}
}

@inproceedings{lu2024neural,
  title={Neural collapse inspired debiased representation learning for min-max fairness},
  author={Lu, Shenyu and Chai, Junyi and Wang, Xiaoqian},
  booktitle={Proceedings of the 30th ACM SIGKDD Conference on Knowledge Discovery and Data Mining},
  pages={2048--2059},
  year={2024}
}

@InProceedings{pmlr-v139-martinez21a,
  title = 	 {Blind Pareto Fairness and Subgroup Robustness},
  author =       {Martinez, Natalia L and Bertran, Martin A and Papadaki, Afroditi and Rodrigues, Miguel and Sapiro, Guillermo},
  booktitle = 	 {Proceedings of the 38th International Conference on Machine Learning},
  pages = 	 {7492--7501},
  year = 	 {2021},
  editor = 	 {Meila, Marina and Zhang, Tong},
  volume = 	 {139},
  series = 	 {Proceedings of Machine Learning Research},
  month = 	 {18--24 Jul},
  publisher =    {PMLR},
  url = 	 {https://proceedings.mlr.press/v139/martinez21a.html}
}

@misc{shah2020pitfallssimplicitybiasneural,
      title={The Pitfalls of Simplicity Bias in Neural Networks}, 
      author={Harshay Shah and Kaustav Tamuly and Aditi Raghunathan and Prateek Jain and Praneeth Netrapalli},
      year={2020},
      eprint={2006.07710},
      archivePrefix={arXiv},
      primaryClass={cs.LG},
      url={https://arxiv.org/abs/2006.07710}, 
}

@misc{nagarajan2024understandingfailuremodesoutofdistribution,
      title={Understanding the Failure Modes of Out-of-Distribution Generalization}, 
      author={Vaishnavh Nagarajan and Anders Andreassen and Behnam Neyshabur},
      year={2024},
      eprint={2010.15775},
      archivePrefix={arXiv},
      primaryClass={cs.LG},
      url={https://arxiv.org/abs/2010.15775}, 
}

@misc{koh2021wildsbenchmarkinthewilddistribution,
      title={WILDS: A Benchmark of in-the-Wild Distribution Shifts}, 
      author={Pang Wei Koh and Shiori Sagawa and Henrik Marklund and Sang Michael Xie and Marvin Zhang and Akshay Balsubramani and Weihua Hu and Michihiro Yasunaga and Richard Lanas Phillips and Irena Gao and Tony Lee and Etienne David and Ian Stavness and Wei Guo and Berton A. Earnshaw and Imran S. Haque and Sara Beery and Jure Leskovec and Anshul Kundaje and Emma Pierson and Sergey Levine and Chelsea Finn and Percy Liang},
      year={2021},
      eprint={2012.07421},
      archivePrefix={arXiv},
      primaryClass={cs.LG},
      url={https://arxiv.org/abs/2012.07421}, 
}

@misc{williams2018broadcoveragechallengecorpussentence,
      title={A Broad-Coverage Challenge Corpus for Sentence Understanding through Inference}, 
      author={Adina Williams and Nikita Nangia and Samuel R. Bowman},
      year={2018},
      eprint={1704.05426},
      archivePrefix={arXiv},
      primaryClass={cs.CL},
      url={https://arxiv.org/abs/1704.05426}, 
}

@INPROCEEDINGS{7410782,
  author={Liu, Ziwei and Luo, Ping and Wang, Xiaogang and Tang, Xiaoou},
  booktitle={2015 IEEE International Conference on Computer Vision (ICCV)}, 
  title={Deep Learning Face Attributes in the Wild}, 
  year={2015},
  volume={},
  number={},
  pages={3730-3738},
  doi={10.1109/ICCV.2015.425}}

@article{goel2020model,
  title={Model patching: Closing the subgroup performance gap with data augmentation},
  author={Goel, Karan and Gu, Albert and Li, Yixuan and R{\'e}, Christopher},
  journal={arXiv preprint arXiv:2008.06775},
  year={2020}
}

@article{nam2020learning,
  title={Learning from failure: De-biasing classifier from biased classifier},
  author={Nam, Junhyun and Cha, Hyuntak and Ahn, Sungsoo and Lee, Jaeho and Shin, Jinwoo},
  journal={Advances in Neural Information Processing Systems},
  volume={33},
  pages={20673--20684},
  year={2020}
}

@inproceedings{creager2021environment,
  title={Environment inference for invariant learning},
  author={Creager, Elliot and Jacobsen, J{\"o}rn-Henrik and Zemel, Richard},
  booktitle={International Conference on Machine Learning},
  pages={2189--2200},
  year={2021},
  organization={PMLR}
}

@inproceedings{ribeiro2016should,
  title={" Why should i trust you?" Explaining the predictions of any classifier},
  author={Ribeiro, Marco Tulio and Singh, Sameer and Guestrin, Carlos},
  booktitle={Proceedings of the 22nd ACM SIGKDD international conference on knowledge discovery and data mining},
  pages={1135--1144},
  year={2016}
}

@inproceedings{beery2018recognition,
  title={Recognition in terra incognita},
  author={Beery, Sara and Van Horn, Grant and Perona, Pietro},
  booktitle={Proceedings of the European conference on computer vision (ECCV)},
  pages={456--473},
  year={2018}
}

@article{banerjee2023shortcuts,
  title={“Shortcuts” causing bias in radiology artificial intelligence: causes, evaluation, and mitigation},
  author={Banerjee, Imon and Bhattacharjee, Kamanasish and Burns, John L and Trivedi, Hari and Purkayastha, Saptarshi and Seyyed-Kalantari, Laleh and Patel, Bhavik N and Shiradkar, Rakesh and Gichoya, Judy},
  journal={Journal of the American College of Radiology},
  volume={20},
  number={9},
  pages={842--851},
  year={2023},
  publisher={Elsevier}
}

@article{winkler2019association,
  title={Association between surgical skin markings in dermoscopic images and diagnostic performance of a deep learning convolutional neural network for melanoma recognition},
  author={Winkler, Julia K and Fink, Christine and Toberer, Ferdinand and Enk, Alexander and Deinlein, Teresa and Hofmann-Wellenhof, Rainer and Thomas, Luc and Lallas, Aimilios and Blum, Andreas and Stolz, Wilhelm and others},
  journal={JAMA dermatology},
  volume={155},
  number={10},
  pages={1135--1141},
  year={2019},
  publisher={American Medical Association}
}

@article{shahamatdar2024deceptive,
  title={Deceptive learning in histopathology},
  author={Shahamatdar, Sahar and Saeed-Vafa, Daryoush and Linsley, Drew and Khalil, Farah and Lovinger, Katherine and Li, Lester and McLeod, Howard T and Ramachandran, Sohini and Serre, Thomas},
  journal={Histopathology},
  volume={85},
  number={1},
  pages={116--132},
  year={2024},
  publisher={Wiley Online Library}
}

@article{singla2022core,
  title={Core risk minimization using salient imagenet},
  author={Singla, Sahil and Moayeri, Mazda and Feizi, Soheil},
  journal={arXiv preprint arXiv:2203.15566},
  year={2022}
}

@article{mccoy2019right,
  title={Right for the wrong reasons: Diagnosing syntactic heuristics in natural language inference},
  author={McCoy, R Thomas and Pavlick, Ellie and Linzen, Tal},
  journal={arXiv preprint arXiv:1902.01007},
  year={2019}
}

@article{zhao2017men,
  title={Men also like shopping: Reducing gender bias amplification using corpus-level constraints},
  author={Zhao, Jieyu and Wang, Tianlu and Yatskar, Mark and Ordonez, Vicente and Chang, Kai-Wei},
  journal={arXiv preprint arXiv:1707.09457},
  year={2017}
}

@article{wu2024reft,
  title={Reft: Representation finetuning for language models},
  author={Wu, Zhengxuan and Arora, Aryaman and Wang, Zheng and Geiger, Atticus and Jurafsky, Dan and Manning, Christopher D and Potts, Christopher},
  journal={Advances in Neural Information Processing Systems},
  volume={37},
  pages={63908--63962},
  year={2024}
}

@article{yin2024lofit,
  title={Lofit: Localized fine-tuning on llm representations},
  author={Yin, Fangcong and Ye, Xi and Durrett, Greg},
  journal={Advances in Neural Information Processing Systems},
  volume={37},
  pages={9474--9506},
  year={2024}
}

@inproceedings{rimsky2024steering,
  title={Steering llama 2 via contrastive activation addition},
  author={Rimsky, Nina and Gabrieli, Nick and Schulz, Julian and Tong, Meg and Hubinger, Evan and Turner, Alexander},
  booktitle={Proceedings of the 62nd Annual Meeting of the Association for Computational Linguistics (Volume 1: Long Papers)},
  pages={15504--15522},
  year={2024}
}

@inproceedings{wang2020towards,
  title={Towards fairness in visual recognition: Effective strategies for bias mitigation},
  author={Wang, Zeyu and Qinami, Klint and Karakozis, Ioannis Christos and Genova, Kyle and Nair, Prem and Hata, Kenji and Russakovsky, Olga},
  booktitle={Proceedings of the IEEE/CVF conference on computer vision and pattern recognition},
  pages={8919--8928},
  year={2020}
}

@inproceedings{japkowicz2000class,
  title={The class imbalance problem: Significance and strategies},
  author={Japkowicz, Nathalie},
  booktitle={Proc. of the Int’l Conf. on artificial intelligence},
  volume={56},
  pages={111--117},
  year={2000}
}

@article{tu2020robusttospurious,
  author       = {Lifu Tu and
                  Garima Lalwani and
                  Spandana Gella and
                  He He},
  title        = {An Empirical Study on Robustness to Spurious Correlations using Pre-trained
                  Language Models},
  journal      = {CoRR},
  volume       = {abs/2007.06778},
  year         = {2020},
  url          = {https://arxiv.org/abs/2007.06778},
  eprinttype    = {arXiv},
  eprint       = {2007.06778},
  bibsource    = {dblp computer science bibliography, https://dblp.org}
}

@misc{wang2022identifyingmitigatingspuriouscorrelations,
      title={Identifying and Mitigating Spurious Correlations for Improving Robustness in NLP Models}, 
      author={Tianlu Wang and Rohit Sridhar and Diyi Yang and Xuezhi Wang},
      year={2022},
      eprint={2110.07736},
      archivePrefix={arXiv},
      primaryClass={cs.CL},
      url={https://arxiv.org/abs/2110.07736}, 
}

@misc{du2023learnshortcutanalyzingmitigating,
      title={Less Learn Shortcut: Analyzing and Mitigating Learning of Spurious Feature-Label Correlation}, 
      author={Yanrui Du and Jing Yan and Yan Chen and Jing Liu and Sendong Zhao and Qiaoqiao She and Hua Wu and Haifeng Wang and Bing Qin},
      year={2023},
      eprint={2205.12593},
      archivePrefix={arXiv},
      primaryClass={cs.CL},
      url={https://arxiv.org/abs/2205.12593}, 
}
\bibliographystyle{icml2026}

\newpage
\appendix
\onecolumn
\section*{Appendix Outline}
This Appendix is organized as follows: 
\begin{itemize}
    \item In Appendix \ref{app:algorithm_implementation}, we provide code to illustrate the implementation of LEIA. 
    \item In Appendix \ref{app:unknown_groups}, we provide motivation for the importance of unknown groups and evaluating under partial information of group relevance with 
    the full proof for Proposition \ref{app:prop_proof} and details of the synthetic data experiment supporting the proposition.   
    \item In Appendix \ref{sec:expdetails}, we include details on datasets, baselines, and implementation with regards to hyperparameters and model training. 
    \item In Appendix \ref{app:ablations}, we demonstrate the robustness of LEIA to the gamma ($\gamma$) hyperparameter and data splitting ratio, as well as a discussion of how $\gamma$ is related to the underlying data distribution. 
    \item In Appendix \ref{sec:theory}, we provide a brief mathematical discussion on how and why LEIA works.
    \item In Appendix \ref{app:related_work_plus}, we discuss related work in more detail than the main text. 
\end{itemize}

\section{Algorithm implementation details}
\label{app:algorithm_implementation}
In this section, we provide the code details of how we implement LEIA: how we compute error-informed weights based on prediction confidence; how the error-weighted covariance matrix is computed; how spectral decomposition identifies the error subspace ($V_k$), and how the error subspace is applied to adjust classifier logits.

\begin{lstlisting}[caption={Computing LEIA error-informed weights}]
def compute_leia_weights(base_logits, y, gamma=10.0):
    """
    Compute LEIA weights that emphasize low-confidence predictions.
    """
    probs = F.softmax(base_logits, dim=-1)

    y_onehot = torch.zeros_like(base_logits).scatter_(
        -1, y.unsqueeze(-1), 1
    )
    p_true = (probs * y_onehot).sum(-1)

    log_weights = gamma * (1 - p_true)
    log_weights = log_weights - log_weights.max()
    weights = log_weights.exp()
    weights = weights / weights.sum()

    return weights.detach()
\end{lstlisting}

\begin{lstlisting}[caption={Error-weighted covariance computation}]
def explain_error_weighted_covariance(embeddings, weights):
    """
    Compute the error-weighted covariance matrix
    """
    weights = weights / weights.sum()

    mean_embedding = (weights.unsqueeze(-1) * embeddings).sum(dim=0)
    centered = embeddings - mean_embedding.unsqueeze(0)

    covariance = (weights.unsqueeze(-1) * centered).T @ centered

    return covariance, mean_embedding
\end{lstlisting}

\begin{lstlisting}[caption={Spectral decomposition to identify the error subspace}]
def explain_spectral_decomposition(embeddings, weights, k=1):
    """
    Identify the k-dimensional error subspace via eigendecomposition.
    """
    covariance, _ = explain_error_weighted_covariance(
        embeddings, weights
    )

    eigenvalues, eigenvectors = torch.linalg.eigh(covariance)

    idx = eigenvalues.argsort(descending=True)
    eigenvalues = eigenvalues[idx]
    eigenvectors = eigenvectors[:, idx]

    V_k = eigenvectors[:, :k]
    top_eigenvalues = eigenvalues[:k]

    return V_k, top_eigenvalues
\end{lstlisting}

\begin{lstlisting}[caption={LEIA model with low-rank error-informed adjustment}]
class LEIAModel(nn.Module):
    """
    Low-rank Error-Informed Adjustment (LEIA) model.
    """
    def __init__(self, base_weight, base_bias, V_k):
        super().__init__()
        self.base_weight = base_weight
        self.base_bias = base_bias
        self.V_k = V_k

        k, num_classes = V_k.shape[1], base_weight.shape[0]
        self.adjustment = nn.Parameter(
            torch.zeros(k, num_classes)
        )

    def forward(self, embeddings):
        base_logits = F.linear(
            embeddings, self.base_weight, self.base_bias
        )

        projected = F.linear(
            embeddings, self.V_k.t(), None
        )

        adjustment_logits = F.linear(
            projected, self.adjustment.t(), None
        )

        return base_logits + adjustment_logits
\end{lstlisting}

\section{Motivation for Unknown Groups Setting}
\label{app:unknown_groups}
\subsection{Proof of Proposition \ref{prop:unknown_group_dro}}
\label{app:prop_proof}

For this section, we label hypotheses $h, h'$ equal if and only if the subgroup risks for each group are all equal, i.e. $h = h' \leftrightarrow \forall G: \mathcal{R}_G(h) = \mathcal{R}_G(h')$

\begin{lemma}
\label{lemma:l1}
    Let $h_\mathrm{ERM} \in \mathcal{H}$ be the Empirical Risk Minimizer among hypotheses classes. Then, for any $h' \ne h_\mathrm{ERM} \in \mathcal{H}$, there must exist some group $G$ such that $\mathcal{R}_G(h') > \mathcal{R}_G(h_\mathrm{ERM})$.
\end{lemma}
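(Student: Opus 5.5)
We argue by contradiction, using the decomposition of the population risk into subgroup risks from the definition of ERM together with the convention fixed at the start of this subsection: two hypotheses are \emph{equal} exactly when all of their subgroup risks coincide.

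Fix $h' \in \mathcal{H}$ with $h' \ne h_\mathrm{ERM}$. Suppose, for contradiction, that no group $G \in \mathcal{G}$ satisfies $\mathcal{R}_G(h') > \mathcal{R}_G(h_\mathrm{ERM})$. Then $\mathcal{R}_G(h') \le \mathcal{R}_G(h_\mathrm{ERM})$ for every $G$. Because $h' \ne h_\mathrm{ERM}$ in the above sense, at least one inequality is strict: there is some $G_0$ with $\mathcal{R}_{G_0}(h') < \mathcal{R}_{G_0}(h_\mathrm{ERM})$.

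Next we use the prevalence-weighted form $\mathcal{R}(h) = \sum_{G} \pi_G \mathcal{R}_G(h)$, with $\pi_G \ge 0$ and $\sum_G \pi_G = 1$. The groupwise inequalities then give
\[
\mathcal{R}(h') = \sum_G \pi_G \mathcal{R}_G(h') \le \sum_G \pi_G \mathcal{R}_G(h_\mathrm{ERM}) = \mathcal{R}(h_\mathrm{ERM}).
\]
The inequality is strict provided $\pi_{G_0} > 0$. In that case $\mathcal{R}(h') < \mathcal{R}(h_\mathrm{ERM})$, which contradicts $h_\mathrm{ERM} \in \argmin_{h\in\mathcal{H}} \mathcal{R}(h)$. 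Hence some group must have strictly larger risk under $h'$, as claimed.

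The main obstacle is the strictness step, which needs $\pi_{G_0} > 0$. We would resolve it by noting the standing assumption that all subgroups in the latent partition are present in the training distribution. Equivalently, we would read the partition as consisting of groups of positive probability, since groups of measure zero contribute nothing and the conditional risks on them are not even well defined. A second subtlety is that ERM minimizers need not be unique. With the equality convention above, two minimizers whose subgroup risks differ would violate the lemma. We would therefore state explicitly that $h_\mathrm{ERM}$ is taken to be the minimizer, unique up to this equivalence. Alternatively, we would note that if another minimizer $h'$ has all subgroup risks $\le$ those of $h_\mathrm{ERM}$ with $\pi_G > 0$, then equality of the weighted sums forces groupwise equality, so $h' = h_\mathrm{ERM}$ in the paper's sense, and the lemma is consistent.
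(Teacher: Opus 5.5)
Your argument is correct and is essentially the paper's proof: assume $\mathcal{R}_G(h')\le\mathcal{R}_G(h_\mathrm{ERM})$ for all $G$, use $h'\ne h_\mathrm{ERM}$ to get one strict inequality, sum with weights $\pi_G$, and contradict the optimality of $h_\mathrm{ERM}$. Your point that the strict step needs $\pi_{G_0}>0$ is legitimate, since the paper uses it silently, but your non-uniqueness ``subtlety'' is unfounded: your contradiction never assumed $h'$ is not a minimizer, and if $h'$ is another minimizer with different subgroup risks, equal positively weighted sums already force some $G$ with $\mathcal{R}_G(h')>\mathcal{R}_G(h_\mathrm{ERM})$, so no uniqueness assumption is needed.
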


\begin{proof}
    Suppose for contradiction that this was not the case, and for all $G$ we had $\mathcal{R}_G(h') \le \mathcal{R}_G(h_\mathrm{ERM})$. Then, this would imply:
    \begin{align*}
        \forall G: \pi_G\mathcal{R}_G(h') \le \pi_G\mathcal{R}_G(h_\mathrm{ERM}) \\
        \sum \limits_{G \in \mathcal{G}} \pi_G\mathcal{R}_G(h') \le \sum \limits_{G \in \mathcal{G}} \pi_G\mathcal{R}_G(h_\mathrm{ERM})
    \end{align*}
    Moreover, by our assumption that $h' \ne h_\mathrm{ERM}$, one of the individual group inequalities must be strict, meaning the second inequality is also strict. But this violates the optimality of $h_\mathrm{ERM}$, as by definition it is $\argmin_{h \in \mathcal{H}} \sum_{G \in \mathcal{G}} \pi_G\mathcal{R}_G(h)$. Thus, a group with our desired properties must exist.
\end{proof}

\textbf{Main proposition:}

\begin{proof}
Let $h_\mathrm{DRO+} \in \mathcal{H}_\mathrm{DRO}(\mathcal{G})$ be an output of running Group DRO with all subgroups known. Assume for now that $h_\mathrm{DRO+} \ne h_\mathrm{ERM}$. From \ref{lemma:l1}, we can find a group $G$ such that $\mathcal R_G(h_\mathrm{DRO+}) > \mathcal R_G(h_\mathrm{ERM})$. We now claim that the proposition holds with $i$ being the index of this group.

Assume that (i) does not hold, and we shall show that (ii) must be true. Sample $h_\mathrm{DRO} \in \mathcal{H}_\mathrm{DRO}(\mathcal{G} \backslash \{G_i\})$ We do casework based on the worst subgroup risk among explicitly optimized-for groups for $h_\mathrm{DRO}$ and $h_\mathrm{DRO+}$.

\textbf{Case 1:} $\max\limits_{G \in \mathcal{G} \backslash \{G_i\}} \mathcal{R}_G(h_\mathrm{DRO}) < \max\limits_{G \in \mathcal{G}} \mathcal{R}_G(h_{\mathrm{DRO+}})$ \\

Suppose further in this case that $\mathcal{R}_{G_i}(h_\mathrm{DRO}) < \mathcal{R}_{G_i}(h_\mathrm{DRO+})$. We would then have:
\begin{align*}
    & \max\limits_{G \in \mathcal G}\mathcal R_G(h_\mathrm{DRO}) \\
    = &\max \{ \mathcal{R}_{G_i}(h_\mathrm{DRO}), \max\limits_{G \in \mathcal G \backslash \{G_i\}}\mathcal R_G(h_\mathrm{DRO})\} \\
    < &\max \{ \mathcal{R}_{G_i}(h_\mathrm{DRO+}), \max\limits_{G \in \mathcal G}\mathcal R_G(h_\mathrm{DRO+}) \} \\
    = &\max\limits_{G \in \mathcal G}\mathcal R_G(h_\mathrm{DRO+})
\end{align*}

where the inequality comes from the fact that strict inequality is assumed for each pair of arguments. However, this violates the definitional optimality of $h_\mathrm{DRO+}$, and consequently, we must have $\mathcal{R}_{G_i}(h_\mathrm{DRO}) \ge \mathcal{R}_{G_i}(h_\mathrm{DRO+})$. Putting it all together, we have
$\mathcal{R}_{G_i}(h_\mathrm{DRO}) \ge \mathcal{R}_{G_i}(h_\mathrm{DRO+}) > \mathcal{R}_{G_i}(h_\mathrm{ERM})$

\textbf{Case 2:} 
\(
\max\limits_{G \in \mathcal{G} \backslash G^\star} \mathcal{R}_G(h_\mathrm{DRO}) = \max\limits_{G \in \mathcal{G}} \mathcal{R}_G(h_{\mathrm{DRO+}})\)

Here, we have $h_\mathrm{DRO+} \in \mathcal{H}_\mathrm{DRO}(\mathcal{G} \backslash \{G_i\})$; i.e. it also minimizes the Group DRO objective with $G_i$ treated as an unknown group. By our assumptions, this directly satisfies (ii).

Finally, we take care of the case where $h_\mathrm{DRO+} = h_\mathrm{ERM}$. Splitting up the work into the same two cases - in the first, we can directly apply \ref{lemma:l1} to $h_\mathrm{DRO}$; and for the second, note that the first condition would hold in this case. This completes the proof.

\end{proof}

\subsection{Synthetic Data Experiment}
\label{app:synthetic}
We generate synthetic data with \(N\) known subgroups and one unknown subgroup (indexed as group \(0\)), where each group exhibits conflicting spurious correlations. Each example has seven features in total: five stable features sampled from a standard Gaussian distribution \(\mathcal{N}(0, I)\) that determine the true label, and two spurious features that induce group-specific correlations. Binary labels are generated as
\[
y = \mathrm{sign}(\theta^\top x_{\text{stable}} + \epsilon),
\]
where \(\theta\) is a randomly sampled unit vector scaled to signal strength 1.5, \(x_{\text{stable}} \in \mathbb{R}^5\) denotes the stable features, and \(\epsilon \sim \mathcal{N}(0, 0.8^2)\) is Gaussian label noise. The unknown subgroup (group \(0\)), with size ratio \(\rho\) relative to each known group, has spurious feature 0 strongly correlated with the label (strength 4.5) and spurious feature 1 anti-correlated (strength 3.0). In contrast, the known subgroups (groups \(1\) through \(N\), with 1000 samples each) exhibit the opposite pattern: spurious feature 1 is positively correlated with the label (strength 4.0), while spurious feature 0 is anti-correlated (strength 3.0). Spurious features are generated deterministically from the labels with added Gaussian noise (\(\sigma = 0.5\)), creating a setting in which optimizing performance on known groups induces systematic errors on the unknown group. We split the dataset into 60\% training, 20\% validation, and 20\% test sets. We train three linear classifiers, each consisting of a single fully connected layer mapping from seven input features to two output classes: (1) ERM, which minimizes the average cross-entropy loss across all training data using SGD (learning rate 0.01, momentum 0.9, batch size 64) for 100 epochs, (2) Group DRO, which is trained on all data but updates adversarial group weights only for the known groups, assigning zero loss weight to the unknown group, with step size \(\eta = 0.01\), and (3) LEIA with rank $k=1$, $\gamma=100$, and no regularization coefficient. Both models observe the same training data; however, Group DRO updates its group weights exclusively based on losses from known groups. For LEIA, we do not use early stopping or hyperparameter tune. We evaluate the three methods on all groups, including the unknown subgroup, to demonstrate that robustness optimization over known groups can lead to degraded performance on previously unseen subgroups relative to ERM. Results are averaged over three seeds (\texttt{0, 1,} and \texttt{42}).

\begin{table}[!h]
\centering
\small
\setlength{\tabcolsep}{2pt}
\renewcommand{\arraystretch}{0.9}
\caption{Unknown-group accuracy (UGA) across synthetic parameter sweeps. \(N\) denotes the number of known groups and \(\rho\) denotes the unknown group size ratio. UGA is reported for ERM and Group DRO. Harm is defined as the percentage-point decrease in UGA when using Group DRO relative to ERM.}
\label{tab:synthetic_uga_full}
\begin{tabular}{c|c|c|c|c|c}
\toprule
\textbf{\(N\)} & \textbf{\(\rho\)} & \textbf{ERM UGA (\%)} & \textbf{GDRO UGA (\%)} & \textbf{Harm (\%)} & \textbf{LEIA UGA (\%)}\\
\midrule
1 & 0.1 & $77.5 \pm15.5$ & $55.4 \pm7.2$ & $22.1 \pm8.5$ & $85.5 \pm11.0$ \\
1 & 0.2 & $84.9 \pm0.4$ & $49.1 \pm9.0$ & $35.8 \pm9.2$ & $91.5 \pm7.8$ \\
1 & 0.3 & $85.1 \pm4.7$ & $44.8 \pm4.8$ & $40.3 \pm9.1$ & $86.9 \pm7.6$ \\
\midrule
2 & 0.1 & $58.3 \pm7.6$ & $42.5 \pm5.9$ & $15.9 \pm3.6$ & $71.0 \pm19.0$ \\
2 & 0.2 & $82.3 \pm4.0$ & $42.2 \pm11.2$ & $40.1 \pm8.8$ & $89.6 \pm4.8$ \\
2 & 0.3 & $75.6 \pm4.9$ & $49.8 \pm9.1$ & $25.8 \pm13.3$ & $88.7 \pm1.4$ \\
\midrule
3 & 0.1 & $63.6 \pm8.8$ & $49.4 \pm9.2$ & $14.2 \pm2.2$ & $55.3 \pm11.5$ \\
3 & 0.2 & $74.8 \pm13.5$ & $57.2 \pm8.9$ & $17.6 \pm7.6$ & $76.2 \pm10.0$ \\
3 & 0.3 & $65.6 \pm3.9$ & $43.4 \pm1.0$ & $22.3 \pm2.9$ & $74.8 \pm18.5$ \\
\midrule
4 & 0.1 & $66.2 \pm4.4$ & $60.2 \pm7.4$ & $6.0 \pm6.3$ & $64.1 \pm4.1$ \\
4 & 0.2 & $61.5 \pm3.6$ & $51.3 \pm8.2$ & $10.2 \pm4.8$ & $79.9 \pm19.7$ \\
4 & 0.3 & $73.0 \pm10.9$ & $50.8 \pm4.8$ & $22.2 \pm6.3$ & $90.2 \pm6.6$ \\
\bottomrule
\end{tabular}

\end{table}

\begin{figure}[!h]
    \centering
    \includegraphics[width=0.85\linewidth]{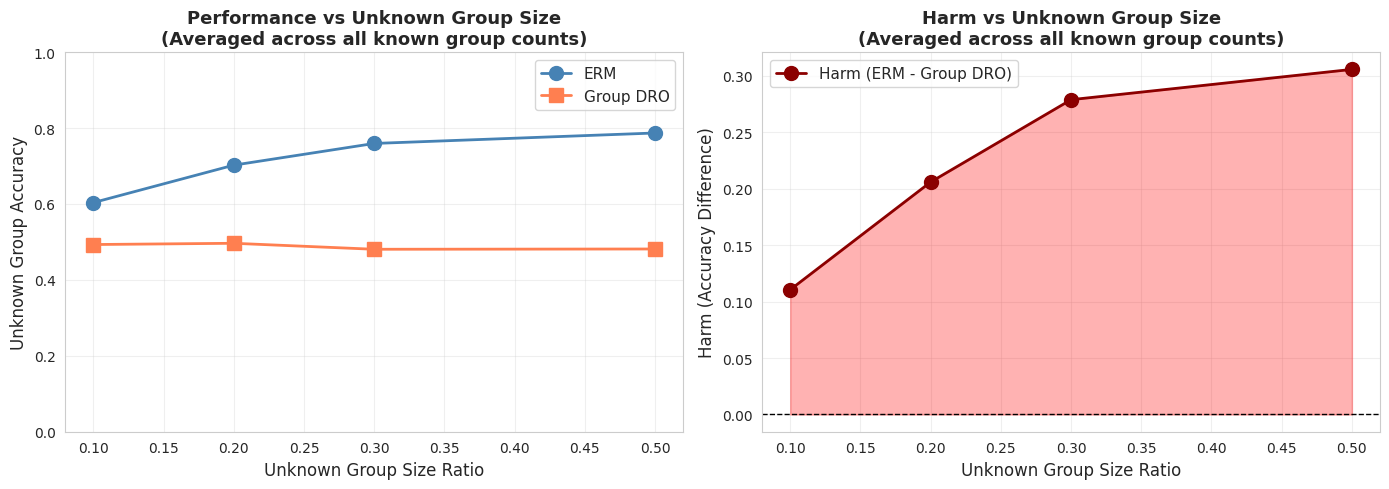}
    \caption{Performance trends as a function of unknown group size ratio. Left: Unknown group accuracy for ERM and Group DRO averaged across all numbers of known groups.  ERM accuracy improves with larger unknown group size (more training data), while Group DRO accuracy  remains relatively constant or decreases. Right: Harm (ERM - Group DRO) increases monotonically  with unknown group size ratio, demonstrating that Group DRO's strategy of ignoring unknown groups  becomes increasingly problematic as unknown groups become more prevalent in the data.}
    \label{fig:unknown_group_size}
\end{figure}

\begin{figure}[!h]
    \centering
    \includegraphics[width=0.85\linewidth]{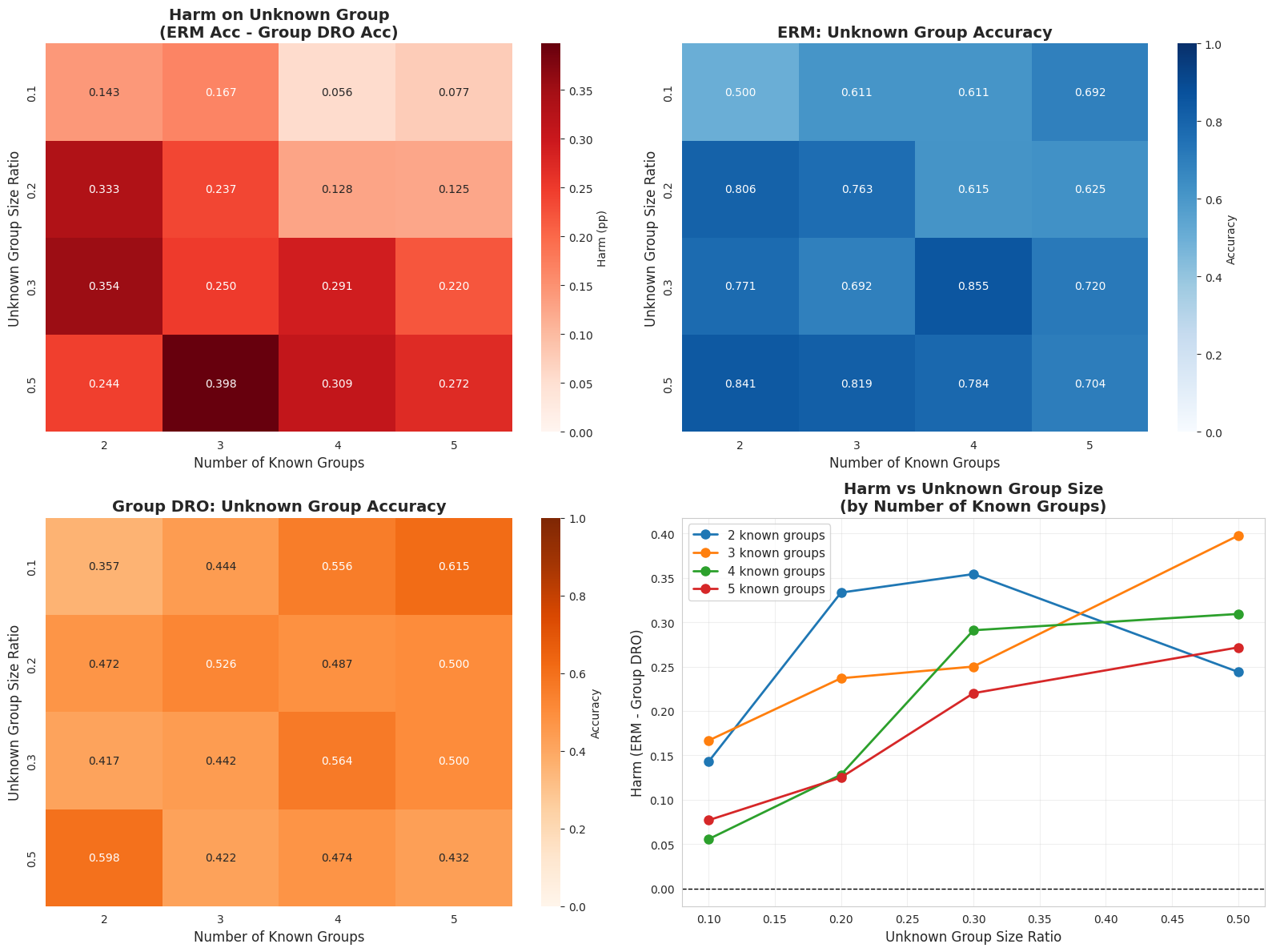}
    \caption{Performance on unknown subgroups across parameter settings.Heatmaps showing (top-left) harm on unknown groups (ERM accuracy minus Group DRO accuracy),  (top-right) ERM accuracy on unknown groups, (bottom-left) Group DRO accuracy on unknown groups,  and (bottom-right) harm as a function of unknown group size ratio for different numbers of known groups.  Results are from synthetic experiments with 2-5 known groups and unknown group size ratios of 0.1-0.5.  Warmer colors indicate higher values. The harm increases with unknown group size ratio, demonstrating  that Group DRO's strategy of ignoring unknown groups becomes more problematic as unknown groups become more prevalent in the data.}
    \label{fig:heatmap_synthetic}
    \vspace{-0.3cm}
\end{figure}

\newpage
\section{Experimental Settings}
\label{sec:expdetails}

\subsection{Datasets}
\label{app:datasets}
Our works makes use of five datasets across various types of group robustness challenges. 

\begin{table}[!h]

\setlength{\tabcolsep}{15pt}
\caption{Example inputs from datasets}
\label{appendix:table:sample-inputs-combined}
\small
\begin{center}
\begin{tabular}{lcccccc}
\toprule[1.5pt]
\textbf{Dataset} & \multicolumn{6}{l}{\textbf{Examples}} \\
\midrule\midrule

\multicolumn{7}{l}{\textit{Image datasets}} \\
\midrule

\texttt{Waterbirds} &
\raisebox{-.4\height}{\includegraphics[width=35pt,height=35pt]{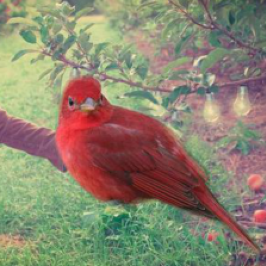}} &
\raisebox{-.4\height}{\includegraphics[width=35pt,height=35pt]{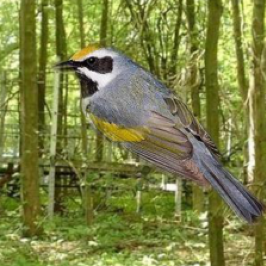}} &
\raisebox{-.4\height}{\includegraphics[width=35pt,height=35pt]{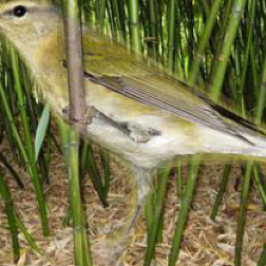}} &
\raisebox{-.4\height}{\includegraphics[width=35pt,height=35pt]{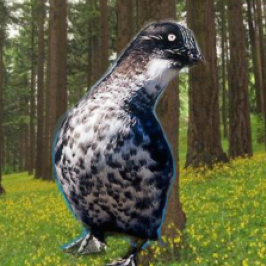}} &
\raisebox{-.4\height}{\includegraphics[width=35pt,height=35pt]{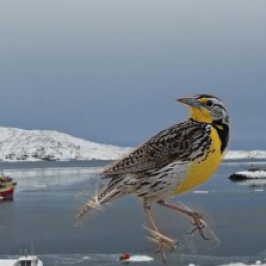}} &
\raisebox{-.4\height}{\includegraphics[width=35pt,height=35pt]{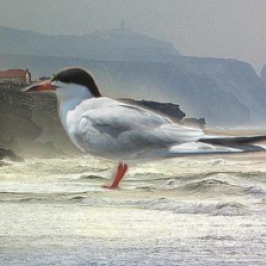}} \\

\midrule

\texttt{CelebA} &
\raisebox{-.4\height}{\includegraphics[width=35pt,height=35pt]{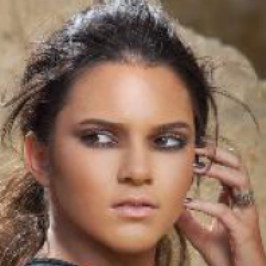}} &
\raisebox{-.4\height}{\includegraphics[width=35pt,height=35pt]{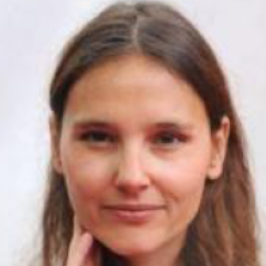}} &
\raisebox{-.4\height}{\includegraphics[width=35pt,height=35pt]{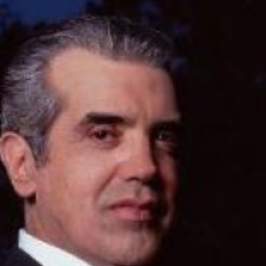}} &
\raisebox{-.4\height}{\includegraphics[width=35pt,height=35pt]{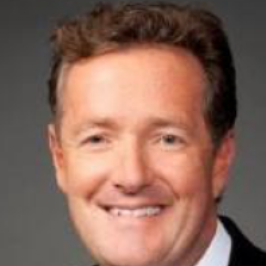}} &
\raisebox{-.4\height}{\includegraphics[width=35pt,height=35pt]{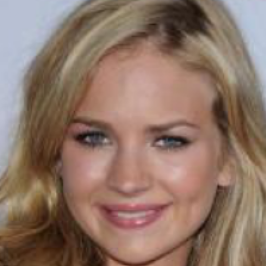}} &
\raisebox{-.4\height}{\includegraphics[width=35pt,height=35pt]{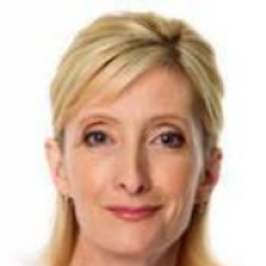}} \\

\midrule

\texttt{CheXpert} &
\raisebox{-.4\height}{\includegraphics[width=35pt,height=35pt]{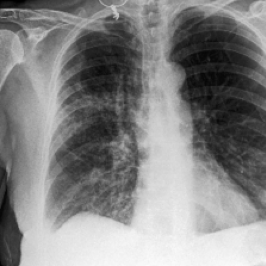}} &
\raisebox{-.4\height}{\includegraphics[width=35pt,height=35pt]{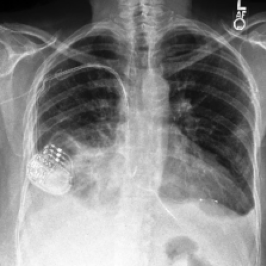}} &
\raisebox{-.4\height}{\includegraphics[width=35pt,height=35pt]{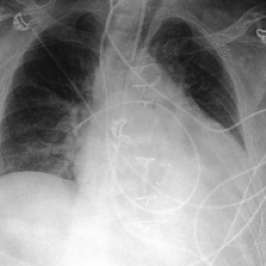}} &
\raisebox{-.4\height}{\includegraphics[width=35pt,height=35pt]{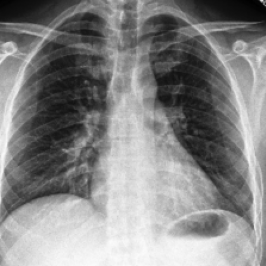}} &
\raisebox{-.4\height}{\includegraphics[width=35pt,height=35pt]{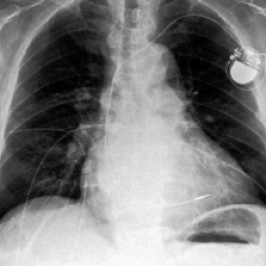}} &
\raisebox{-.4\height}{\includegraphics[width=35pt,height=35pt]{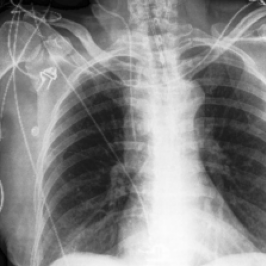}} \\

\midrule
\midrule

\multicolumn{7}{l}{\textit{Text datasets}} \\
\midrule

\texttt{CivilComments} &
\multicolumn{6}{l}{``Munchins looks like a munchins. The man who dont want to show his taxes, will tell you everything...''} \\
& \multicolumn{6}{l}{``The democratic party removed the filibuster to steamroll its agenda. Suck it up boys and girls.''} \\
& \multicolumn{6}{l}{``so you dont use oil? no gasoline? no plastic? man you ignorant losers are pathetic.''} \\

\midrule

\texttt{MultiNLI} &
\multicolumn{6}{l}{``The analysis proves that there is no link between PM and bronchitis.''} \\
& \multicolumn{6}{l}{``Postal Service were to reduce delivery frequency.''} \\
& \multicolumn{6}{l}{``The famous tenements (or lands) began to be built.''} \\

\bottomrule[1.5pt]
\end{tabular}
\end{center}
\vspace{-0.3cm}
\end{table}

\textbf{\textsc{Waterbirds}} \citep{sagawa2020GDRO} is an image classification benchmark whose task is to classify birds as `waterbirds' (birds that frequent water) or `landbirds' (birds that do not). The dataset is an example of \textit{spurious correlations}: waterbirds often appears in images with water, while landbirds do not, causing models to often fail to classify waterbirds without water backgrounds or landbirds with water backgrounds.

\textbf{\textsc{CelebA}} \citep{7410782} is an image classification benchmark whose task is to classify celebrities as blond or not blond. The dataset is an example of \textit{attribute imbalance}, as blond hair is found more frequently in females, making it hard to correctly identify blond males or non-blond females.

\textbf{\textsc{MultiNLI}} \citep{williams2018broadcoveragechallengecorpussentence} is a text classification dataset where models must predict the relationship between two sentences from the possibilities of `entailment', `contradiction', or `neutral'. This dataset is an example of \textit{spurious correlations}: many examples of the `contradiction' label contain negations, which may cause models to associate the presence of these erroneously with contradictions.

\textbf{\textsc{CivilComments}} \cite{borkan2019nuanced} is a text classification dataset whose task is to classify online comments as `toxic' and `non-toxic'. We use the version from the WILDS benchmark \cite{koh2021wildsbenchmarkinthewilddistribution}. The comments are tagged with mentioned demographic group. This dataset is an example of \textit{class imbalance}: some of these groups are underrepresented, and models run the risk of associating group membership with the solution to the classification task.

\textbf{\textsc{CheXpert}} \citep{irvin2019chexpertlargechestradiograph} is a dataset of chest radiographs where the task is to predict whether or not there are abnormalities. The dataset is an example of significant \textit{class imbalance}, as some conditions (varieties of abnormalities) are not frequently found in the training dataset.

\subsection{Baselines}
\label{sec:appendix_baselines}
\paragraph{Empirical Risk Minimization (ERM)}optimizes for average performance across the entire training distribution. While effective in balanced settings, ERM is prone to prioritizing majority-group characteristics, often at the expense of performance on underrepresented subpopulations. By optimizing a global average, ERM is frequently susceptible to spurious correlations and performance degradation under subpopulation shift. 
\paragraph{Classifier Re-train (CRT, ReWeightCRT)}\cite{kang2020decouplingrepresentationclassifierlongtailed} decouples representation learning from classifier learning by first training a feature extractor using standard ERM, then retraining the classifier with class-balanced reweighting or resampling. This improves performance on underrepresented classes without modifying the learned representation.
\paragraph{Just Train Twice (JTT)}\cite{liu2021jtt}  
is a two-stage method that first trains a model using ERM to identify hard examples, defined as samples misclassified by the initial model. In the second stage, the model is retrained while upweighting these hard examples, improving robustness to spurious correlations and subpopulation shifts.
\paragraph{Mixup}\cite{zhang2017mixup} is a simple, data-agnostic augmentation technique that constructs virtual training examples via convex combinations of pairs of inputs and labels. By encouraging the model to behave linearly between examples, Mixup implicitly regularizes the decision boundary, improves generalization, and reduces sensitivity to spurious correlations. 
\paragraph{LISA}\cite{yao2022improvingoutofdistributionrobustnessselective} improves out-of-distribution robustness by applying Mixup selectively across specific groups. LISA alternates between intra-label LISA, which interpolates samples with the same label but different domains to cancel out domain-specific noise, and intra-domain LISA, which interpolates samples within the same domain but with different labels to force the model to ignore domain-wide biases.
This encourages the learning of invariant predictors that generalize better to subpopulation and domain shifts.
\paragraph{Deep Feature Reweighting (DFR)}\cite{kirichenko2022dfr} 
first trains a model via standard ERM on the full dataset to develop a robust feature extractor using only the dataset and labels. Then, the classification head is discarded and a new one is trained from scratch using a balanced validation set, requiring group information. 

\paragraph{Diverse Prototypical Ensembles (DPE)}\cite{to2025diverseprototypicalensemblesimprove} addresses subpopulation shifts by replacing the standard linear classification head with an ensemble of diversified, distance-based prototype classifiers. 

DPE employs an explicit inter-prototype similarity loss to enforce functional diversity among ensemble members. This allows the model to capture multiple distinct decision boundaries, effectively discovering and representing latent subpopulations even when group annotations or distributions are unknown. 

\paragraph{Correct-n-Contrast (CnC)}\cite{zhang2022cnc} is a two-stage framework designed to improve worst-group performance by leveraging contrastive learning without explicit group labels. In the first stage, a standard ERM model is trained to identify samples prone to spurious correlations, using its predictions as proxies for latent group attributes. In the second stage, the model is trained using a supervised contrastive loss that treats samples with the same class but different proxy attributes as ``positives" (to be pulled together) and samples with different classes but the same proxy attributes as ``negatives" (to be pushed apart). This forces the encoder to prioritize class-invariant features while ignoring the spurious attributes identified in the first stage.
\paragraph{Automatic Feature Reweighting (AFR)}\cite{qiu2023simplefastgrouprobustness} is a last-layer retraining method designed to reduce a model's reliance on spurious correlations. AFR updates the classification head of a pre-trained ERM model using a weighted loss that automatically emphasizes examples where the initial model performs poorly. By upweighting these "hard" examples, which typically correspond to minority subpopulations, AFR improves robustness to distribution shifts without requiring explicit group annotations or expensive retraining of the entire network.
\paragraph{ Group Inference via data Comparison (GIC)}\cite{han2024improvinggrouprobustnessspurious} is a group inference framework that identifies latent spurious attributes by leveraging two datasets with differing group distributions. It trains a spurious attribute classifier based on two key properties of spurious correlations: (1) spurious attributes are highly predictive of the label, and (2) the strength of this predictiveness varies across datasets with different group proportions. By encouraging inferred attributes that exhibit dataset-dependent label correlations, GIC produces more accurate pseudo group labels, which can then be used by downstream robust learning algorithms to improve worst-group performance.
\paragraph{Invariant Risk Minimization (IRM)}\cite{arjovsky2019invariant} aims to learn predictors whose performance is stable across multiple training environments, thereby promoting out-of-distribution generalization. IRM formalizes this by seeking a data representation such that the optimal classifier on top of that representation is simultaneously optimal for all environments, effectively isolating invariant correlations while ignoring spurious ones. 
\paragraph{Conditional Value-at-Risk DRO (CVaRDRO)}\cite{duchi2020learningmodelsuniformperformance} proposes a distributionally robust optimization variant that dynamically upweights samples with the highest losses without requiring explicit group annotations. Based on the Conditional Value-at-Risk (CVaR) objective, this method focuses optimization on the tail of the loss distribution, which often correspond to minority subpopulations or examples misaligned with spurious correlations. 
\paragraph{Group DRO}\cite{sagawa2020GDRO} improves subpopulation robustness by minimizing a surrogate objective that upper-bounds the worst-group loss. Instead of optimizing average empirical risk, it maintains group-level weights (requiring explicit group annotations) and dynamically upweights groups with higher loss during training.

\subsection{Implementation}
In this section, we provide specific implementation details to reproduce our results. First, we include LEIA-specific information with the two stage training. For the stage 1 ERM training (see Table \ref{tab:stage1_config}), we follow ERM generated used in the Automatic Feature Reweighting (AFR) \citep{qiu2023simplefastgrouprobustness} for the \textsc{Waterbirds}, \textsc{CelebA}, \textsc{CivilComments}, and \textsc{MultiNLI} datasets. For \textsc{CheXpert}, we use the SubpopBench \citep{yang2023changehardcloserlook} implementation of ERM till convergence. 

\begin{table}[!h]
\centering
\caption{LEIA Stage 1 ERM Training Configuration for All Datasets}
\label{tab:stage1_config}
\resizebox{0.95\textwidth}{!}{%
\begin{tabular}{lccccccc}
\toprule
\textbf{Dataset} & \textbf{Architecture} & \textbf{Epochs} & \textbf{Batch Size} & \textbf{Learning Rate} & \textbf{Weight Decay} & \textbf{Scheduler} & \textbf{Optimizer} \\
\midrule
\textsc{Waterbirds} & ResNet-50 (ImageNet) & 50 & 32 & 0.003 & $1 \times 10^{-4}$ & Constant & SGD \\
\textsc{CelebA} & ResNet-50 (ImageNet) & 20 & 100 & 0.003 & $1 \times 10^{-4}$ & Cosine & SGD \\
\textsc{CivilComments} & BERT-base-uncased & 3 & 32 & $2 \times 10^{-5}$ & 0.0 & Constant & AdamW \\
\textsc{MultiNLI} & BERT-base-uncased & 3 & 32 & $2 \times 10^{-5}$ & 0.0 & Constant & AdamW \\
\textsc{CheXpert} & ResNet-50 (ImageNet) & 13$^*$ & 108 & 0.001 & $1 \times 10^{-4}$ & Default$^{**}$ & SGD \\
\bottomrule
\end{tabular}%
}
\begin{flushleft}
\footnotesize
$^*$ CheXpert uses 20,001 training steps (approximately 13 epochs with batch size 108). \\
$^{**}$ CheXpert uses SubpopBench's default scheduler configuration.
\end{flushleft}
\end{table}

\begin{table}[!h]
\centering
\caption{LEIA Stage 2 Hyperparameter Configurations Across Datasets}
\label{tab:stage2_config}
\resizebox{0.55\textwidth}{!}{%
\begin{tabular}{lcccc}
\toprule
\textbf{Dataset} & \textbf{Epochs} & \textbf{Gamma Values} & \textbf{Rank Values} \\
\midrule
\textsc{Waterbirds} & 100 & 7.5, 8.5 & 4, 16, 32 \\
\textsc{CelebA} & 100 & 0.5, 0.75, 1.25, 1.5 & 2, 4, 12, 36, 52 \\
\textsc{CivilComments} & 50 & 500, 1000, 5000 & 1, 4, 5 \\
\textsc{MultiNLI} & 50 & 200, 500, 1000 & 1, 2, 3 \\
\textsc{CheXpert} & 50 & 0.5, 1, 1.5 & 50, 52, 54 \\
\bottomrule
\end{tabular}
}
\begin{flushleft}
\footnotesize
\textbf{Note:} Across all datasets, the regularization coefficient (\texttt{reg\_coeff}) is set to 0.0 and the learning rate is 0.02.
\end{flushleft}
\vspace{-5pt}
\end{table}

For the no group information setting and complete group information, asterisked results in Tables \ref{tab:no_group_info} and \ref{tab:complete_group_info} reproduce results from SubpopBench's implementation \citep{yang2023changehardcloserlook}. DPE, GIC, AFR, and CnC results are from the methods' papers respectively \citep{to2025diverseprototypicalensemblesimprove, han2024improvinggrouprobustnessspurious, qiu2023simplefastgrouprobustness, zhang2022cnc}. In the setting with complete information of group relevance, the JTT, LISA, DFR, and GroupDRO results are from the methods' papers respectively \citep{liu2021jtt, yao2022improvingoutofdistributionrobustnessselective, kirichenko2022dfr, sagawa2020GDRO}. 

For the partial group setting, we provide complete details on all the hyperparameter tuning for baselines (see Table \ref{tab:combined_baseline_hyperparams}). To ensure fair comparison between LEIA and baseline, we ensure sufficient tuning by considering configurations from SubpopBench and original papers. Note that we do early stopping on the validation worst group accuracy with known groups, making epochs not as important to tune across methods and having long training horizons to ensure we train till convergence. 

\begin{table}[!h]
\centering
\caption{Hyperparameter Sweep Configurations for Baseline Methods in (\textsc{CivilComments} \& \textsc{CheXpert}) for Partial Group Relevance Setting}
\label{tab:combined_baseline_hyperparams}
\resizebox{0.95\textwidth}{!}{%
\begin{tabular}{lcccccc}
\toprule
\textbf{Method} & \textbf{Search Type} & \textbf{Trials} & \textbf{Epochs} & \textbf{Learning Rate} & \textbf{Hyperparameters} \\
\midrule
LEIA & Grid & 10 & 100 & $2 \times 10^{-2}$ & $\gamma \in [100, 1000]$, $k \in [1, 4, 5, 6, 9]$, $\lambda = 0$ \\
AFR (CC) & Grid & 12 & 100 & $3 \times 10^{-3}$ & $\gamma \in [0, 0.01, 0.1, 1, 3, 10]$, $\lambda \in [0.0, 1 \times 10^{-2}]$ \\
AFR (CX) & Grid & 32 & 100 & $[10^{-4}, 10^{-3}, 10^{-2}, 5{\times}10^{-2}]$ & $\gamma \in [1, 2, 4, 6]$, $\lambda \in [0.0, 1 \times 10^{-2}]$ \\
DFR (CC) & Random & 25 & 100 & $1 \times 10^{-2}$  & $\text{dfr\_reg} \sim 10^{\text{Uniform}(-2.0, 0.5)}$ \\
DFR (CX) & Random & 25 & 100 & $1 \times 10^{-2}$  & $\text{dfr\_reg} \sim 10^{\text{Uniform}(-0.5, 0.5)}$ \\
CRT & None & 1 & 100 & $1 \times 10^{-2}$  & None \\
Reweight & None & 1 & 100 & $1 \times 10^{-2}$  & None \\
JTT & Random & 25 & 100 & $1 \times 10^{-5}$  & $\text{step\_frac} \sim \text{Uniform}(0.2, 0.8)$, $\lambda \sim 10^{\text{Uniform}(0, 2.5)}$, $\text{wd} = 0.1$ (Stage 1 \& 2) \\
LISA & Random & 25 & 100 & $1 \times 10^{-2}$  & $\alpha \sim 10^{\text{Uniform}(-1, 1)}$, $p_{\text{sel}} \sim \text{Uniform}(0, 1)$ \\
IRM & Random & 25 & 100 & $1 \times 10^{-2}$  & $\lambda \sim 10^{\text{Uniform}(-1, 5)}$, $\text{iters} \sim 10^{\text{Uniform}(0, 4)}$ \\
CVaRDRO & Random & 25 & 100 & $1 \times 10^{-2}$  & $\alpha \sim 10^{\text{Uniform}(-2, 0)}$ \\
Mixup & Random & 25 & 100 & $1 \times 10^{-2}$  & $\alpha \sim 10^{\text{Uniform}(0, 4)}$ \\
GroupDRO & None & 1 & 100 & $1 \times 10^{-2}$  & $\eta = 1 \times 10^{-2}$  \\
ERM & None & 1 & 100 & $1 \times 10^{-2}$  & None \\
\bottomrule
\end{tabular}%
}
\begin{flushleft}
\footnotesize
\textbf{Notes:}
\begin{itemize}
\itemsep0em
    \item LEIA: $\gamma$ = reweighting strength, $k$ = spectral rank
    \item AFR: $\gamma$ = reweighting strength, $\lambda$ = regularization coefficient
    \item DFR: dfr\_reg = L1 regularization on classifier weights
    \item JTT: step\_frac = fraction of epochs for Stage 1, $\lambda$ = upweighting factor for misclassified examples
    \item LISA: $\alpha$ = mixup Beta parameter, $p_{\text{sel}}$ = selection probability
    \item IRM: $\lambda$ = penalty weight, iters = penalty annealing iterations
    \item CVaRDRO: $\alpha$ = uncertainty set size (CVaR parameter)
    \item Mixup: $\alpha$ = mixup Beta parameter
    \item GroupDRO: $\eta$ = group weight update rate
\end{itemize}
\end{flushleft}
\end{table}

\subsection{Runtime Analysis}
\label{appendix:runtime}
We provide Table \ref{tab:runtime} to accompany Figure \ref{fig:runtime} to clearly show the differences between baselines. Timings are obtained by are obtained by running on a single RTX8000 (48 GB) NVIDIA GPU. As with prior work \citep{qiu2023simplefastgrouprobustness}, we included the time to pre-compute and cache the embeddings, which in practice can be ignored by amortizing across a hyperparameter sweep. 
\begin{table}[h]
\centering
\caption{End-to-end Runtime Comparison (minutes)}
\label{tab:runtime}
\begin{tabular}{lcccccc}
\toprule
\textbf{Dataset} & \textbf{ERM} & \textbf{LEIA} & \textbf{AFR} & \textbf{JTT} & \textbf{DPE} & \textbf{GIC} \\
\midrule
\textsc{Waterbirds} & 25.05 & 26.53 & 26.58 & 76.08 & 40.12 & 120.20 \\
\textsc{CelebA} & 158.36 & 167.03 & 169.27 & 479.71 & 254.00 & 767.24 \\
\textsc{CivilComments} & 145.95 & 164.89 & 166.68 & 441.85 & 257.51 & 757.72 \\
\textsc{MultiNLI} & 61.23 & 72.99 & 73.12 & 185.52 & 116.33 & 335.29 \\
\bottomrule
\end{tabular}
\end{table}

\section{Additional results}
\label{app:ablations}

\subsection{Additional Baselines for Complete Information of Group Relevancy Setting}
\label{app:complete_group_oracle}
We provide the results of DFR, DPE, Group DRO, IRM, and LISA in this table for maximum completeness. 

\setlength{\tabcolsep}{5pt}
\begin{table*}[!h]
\caption{
\textbf{Worst-group accuracy (WGA) and average accuracy on the test set with \underline{complete informtion of group relevance.}} Baselines are divided into two types based on whether group labels are used for training the method. Rows denoted with asterisks (*) and CheXpert results reproduces baselines \cite{yang2023changehardcloserlook}. Other results are reported from original papers of competitive baselines. Group Info (Train/Val) indicates whether group labels are used: \xmark{} = no group info used, \bstar{} = group info used for hyperparameter tuning and early stopping, \cmark = group info used for training. Best and second-best values within each half are \textbf{bold} and \underline{underlined}, respectively. ``-'' indicates results not reported. All of our results are the mean $\pm$ standard deviation (in \%) averaged over three independent runs. 
}
\centering
\resizebox{\textwidth}{!}{%
\begin{tabular}{cc|cc|cc|cc|cc|cc}
\toprule
\multirow{2}{*}{\textbf{Algorithm}} &
\textbf{Group Info} &
\multicolumn{2}{c}{\textsc{Waterbirds}} &
\multicolumn{2}{c}{\textsc{CelebA}} &
\multicolumn{2}{c}{\textsc{CivilComments}} &
\multicolumn{2}{c}{\textsc{MultiNLI}} &
\multicolumn{2}{c}{\textsc{CheXpert}} \\
\cmidrule(lr){3-12}
&
\textbf{(Train/Val)} &
\textbf{WGA} & \textbf{Avg Acc} &
\textbf{WGA} & \textbf{Avg Acc} &
\textbf{WGA} & \textbf{Avg Acc} &
\textbf{WGA} & \textbf{Avg Acc} &
\textbf{WGA} & \textbf{Avg Acc} \\
\midrule
\rowcolor{gray!15}\multicolumn{12}{c}{\textit{Group labels are not used for training}} \\ 
\midrule

ERM* & \xmark/\bstar
& $69.1\scriptstyle{\pm4.7}$ & $84.1\scriptstyle{\pm1.7}$
    &
    $57.6\scriptstyle{\pm0.8}$ & $\underline{95.0}\scriptstyle{\pm0.1}$
    &
    $63.2\scriptstyle{\pm1.2}$ & $85.4\scriptstyle{\pm0.2}$
    &
    $69.5\scriptstyle{\pm0.3}$ & $80.9\scriptstyle{\pm0.3}$
    &
    $41.7\scriptstyle{\pm3.4}$ & $\textbf{88.6}\scriptstyle{\pm0.7}$ \\
CRT* & \xmark/\bstar
& $76.3\scriptstyle{\pm0.8}$ & $89.2\scriptstyle{\pm0.1}$
& $70.4\scriptstyle{\pm}0.4$ & $94.1\scriptstyle{\pm0.1}$
& $68.5\scriptstyle{\pm0.0}$ & $83.0\scriptstyle{\pm0.0}$
& $65.4\scriptstyle{\pm0.1}$ & $80.2\scriptstyle{\pm0.0}$
& $74.0\scriptstyle{\pm0.2}$ & $79.1\scriptstyle{\pm0.1}$\\

ReWeightCRT* & \xmark/\bstar
& $76.3\scriptstyle{\pm0.2}$ & $89.4\scriptstyle{\pm0.3}$
& $71.1\scriptstyle{\pm}0.5$ & $94.2\scriptstyle{\pm0.1}$
& $68.2\scriptstyle{\pm0.4}$ & $83.4\scriptstyle{\pm0.0}$
& $65.3\scriptstyle{\pm0.1}$ & $80.2\scriptstyle{\pm0.0}$
& $73.9\scriptstyle{\pm0.2}$ & $79.0\scriptstyle{\pm0.1}$ \\

JTT & \xmark/\bstar
& $86.7$ & $93.3$
& $81.1$ & $88.0$
& $69.3$ & $\textbf{91.1}$
& $\underline{72.6}$ & $78.6$
& $60.4\scriptstyle{\pm4.8}$ & $75.2\scriptstyle{\pm0.8}$ \\

CVaRDRO* & \xmark/\bstar
& $75.5\scriptstyle{\pm}2.2$ 
& $89.9\scriptstyle{\pm}0.4$  &
 $62.2\scriptstyle{\pm}3.1$ &
 $95.1\scriptstyle{\pm}0.1$ &
 $68.7\scriptstyle{\pm}1.3$ &
 $83.5\scriptstyle{\pm}0.3$ &
$63.0\scriptstyle{\pm}1.5$ &
$75.1\scriptstyle{\pm}0.1$ &
$50.2\scriptstyle{\pm}1.8$ &
$73.7\scriptstyle{\pm}1.0$
 \\

CnC & \xmark/\bstar
& $88.5\scriptstyle{\pm}0.3$ & $90.9\scriptstyle{\pm}0.1$
& $\underline{88.8}\scriptstyle{\pm}0.9$ & $89.9\scriptstyle{\pm}0.5$
& $68.9\scriptstyle{\pm}2.1$ & $81.7\scriptstyle{\pm}0.5$
& $-$ & $-$
& $-$ & $-$ \\

AFR & \xmark/\bstar
& $\underline{90.4}\scriptstyle{\pm}1.1$ & $\textbf{94.2}\scriptstyle{\pm}1.2$
& $82.0\scriptstyle{\pm}0.5$ & $91.3\scriptstyle{\pm}0.3$
& $68.7\scriptstyle{\pm}0.6$ 
& $89.8\scriptstyle{\pm}0.6$
& $\textbf{73.4}\scriptstyle{\pm}0.6$ 
& $\underline{81.4}\scriptstyle{\pm}0.2$ 
& $-$ & $-$ \\

GIC & \xmark/\bstar
& $86.3\scriptstyle{\pm}0.1$ & $89.6\scriptstyle{\pm}1.3$
& $\textbf{89.4}\scriptstyle{\pm}0.2$ & $91.9\scriptstyle{\pm}0.1$
& $\underline{72.5}\scriptstyle{\pm}0.3$ & $90.0\scriptstyle{\pm}0.3$
& $-$ & $-$  
& $-$ & $-$\\

\rowcolor{teal!20} LEIA (Ours) & \xmark/\bstar
& $\textbf{90.7}\scriptstyle{\pm}0.2$ & $\underline{93.3}\scriptstyle{\pm}0.7$
& $85.0\scriptstyle{\pm}0.9$ & $\textbf{95.2}\scriptstyle{\pm}0.1$
& $\textbf{72.9}\scriptstyle{\pm}0.2$ &
$\underline{90.9}\scriptstyle{\pm}0.8$
& $69.6\scriptstyle{\pm}0.8$ 
& $\textbf{81.4}\scriptstyle{\pm}0.4$ 
& $\textbf{75.2}\scriptstyle{\pm}0.3$ & $\underline{79.6}\scriptstyle{\pm}0.3$ \\
\midrule
\rowcolor{gray!15}\multicolumn{12}{c}{\textit{Group labels are used for training}} \\ 
\midrule

DFR & \xmark/\cmark\cmark
& $92.9\scriptstyle{\pm}0.4$ & $94.2\scriptstyle{\pm}0.4$
& $88.3\scriptstyle{\pm}1.1$ & $91.3\scriptstyle{\pm}0.3$
& $70.1\scriptstyle{\pm}0.8$ & $87.2\scriptstyle{\pm}0.3$
& $74.7\scriptstyle{\pm}0.7$ & $82.1\scriptstyle{\pm}0.2$
& $71.7\scriptstyle{\pm}0.2$ & $78.2\scriptstyle{\pm}0.4$ \\

DPE & \xmark/\cmark
& $91.0\scriptstyle{\pm}0.4$ & $92.5\scriptstyle{\pm0.2}$
& $89.8\scriptstyle{\pm}0.2$ & $87.7\scriptstyle{\pm0.6}$
& $82.2\scriptstyle{\pm}0.2$ & $71.5\scriptstyle{\pm0.6}$
& $81.3\scriptstyle{\pm}0.2$ & $74.8\scriptstyle{\pm0.3}$
& $-$ & $-$ \\
Group DRO & \cmark/\bstar
& $91.4$ & $93.5$
& $88.9$ & $92.9$
& $77.7$ & $81.4$
& $69.9$ & $88.9$
& $74.5\scriptstyle{\pm0.2}$ & $78.9\scriptstyle{\pm0.3}$ \\
IRM* & \cmark/\bstar
& $74.5\scriptstyle{\pm}1.5$ 
& $88.4\scriptstyle{\pm}0.1$  &
 $63.0\scriptstyle{\pm}2.5$ &
 $94.7\scriptstyle{\pm}0.8$ &
 $63.2\scriptstyle{\pm}0.8$ &
 $85.5\scriptstyle{\pm}0.0$ &
$63.6\scriptstyle{\pm}1.3$ &
$77.8\scriptstyle{\pm}0.6$ &
$34.4\scriptstyle{\pm}1.7$ &
$89.8\scriptstyle{\pm}0.3$
 \\
 LISA & \cmark/\bstar
& $89.2$ & $91.8$
& $89.3$ & $92.4$
& $72.6$ & $89.2$
& $-$ & $-$
& $75.6\scriptstyle{\pm}0.6$ & $79.2\scriptstyle{\pm}0.8$\\

\bottomrule
\end{tabular}
}
\vspace{-12pt}
\label{tab:complete_group_info}
\end{table*}

\subsection{Ablation to $\gamma$ parameter}
\label{ref:gamma_ablation}
Beyond rank robustness, LEIA is also robust to the choice of the reweighting strength hyperparameter $\gamma$. Different datasets exhibit different optimal $\gamma$ values image datasets typically favor $\gamma \approx 1$--$8$, whereas text datasets require larger values around $\gamma \approx 500$--$5000$. Figure \ref{fig:gamma_robustness} shows that performance remains stable across a broad neighborhood of these optima. For each $\gamma$, we select the best rank based on validation worst-group accuracy (WGA) and report the mean $\pm$ one standard deviation across seeds.

Across all datasets, LEIA performance varies by less than 2--3 percentage points over the tested $\gamma$ ranges, despite optimal values differing by orders of magnitude. This robustness arises from the exponential weighting scheme, $\mu_i = \exp\!\big(\gamma (1 - p_i)\big)$, which induces meaningful relative weight differences over a wide range of $\gamma$ values. Together with our analysis linking optimal $\gamma$ to dataset-specific confidence distributions, these results demonstrate that LEIA is practical to deploy: practitioners can initialize $\gamma$ using coarse, dataset-appropriate values and fine-tune within a forgiving range.

Taken together, the rank and $\gamma$ robustness analyses show that LEIA has two relatively forgiving hyperparameters, making it substantially easier to use than methods that require precise hyperparameter tuning.

\begin{figure}[!h]
    \centering
    \includegraphics[width=0.85\linewidth]{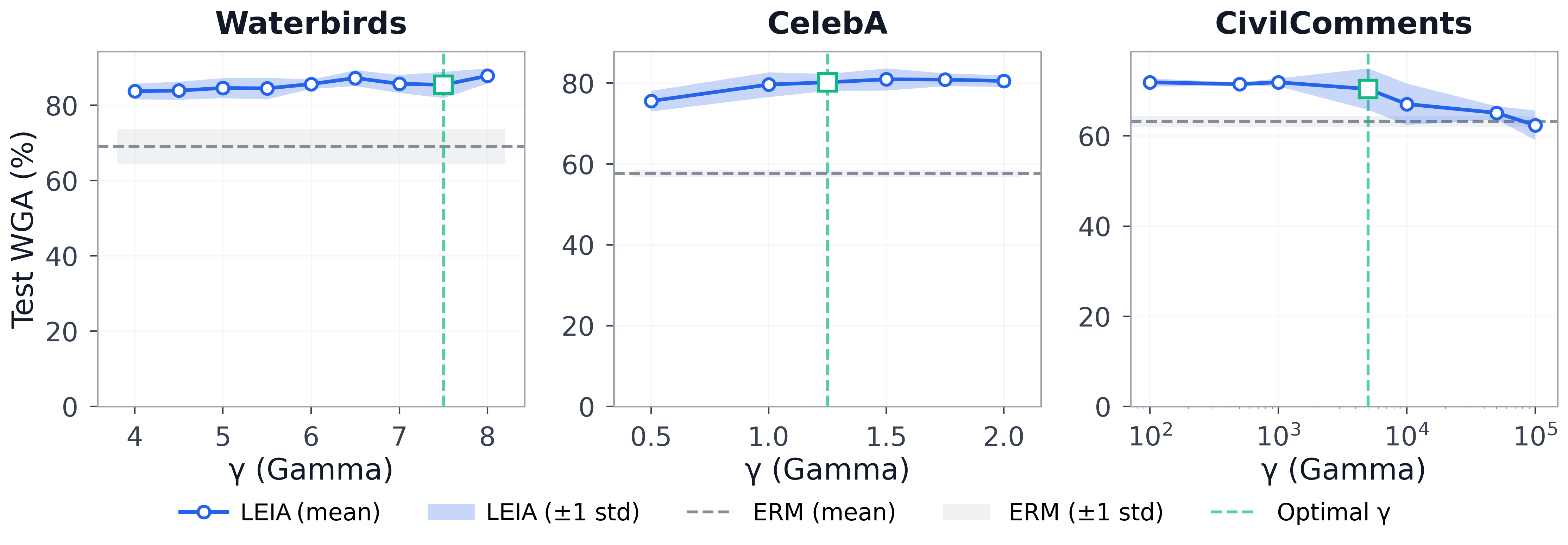}
    \caption{\textbf{LEIA Robustness to Gamma ($\gamma$) Selection.} Test worst-group accuracy (WGA) as a function of gamma for \textsc{Waterbirds, CelebA, and CivilComments}. For each gamma value, we select the best rank (by validation WGA) and report mean $\pm$ 1 standard deviation across seeds. The green dashed line and diamond marker indicate the empirically-observed optimal gamma for each dataset. The gray dashed line and shaded region show ERM performance (mean $\pm$ 1 std). LEIA demonstrates robustness to gamma selection, with performance remaining stable across a reasonable range around the optimal values.}
    \label{fig:gamma_robustness}
\end{figure}

Like $k$, we also observe that optimal $\gamma$ values vary based on the underlying dataset distribution. The hyperparameter $\gamma$ in LEIA controls the strength of exponential reweighting based on model uncertainty. Specifically, LEIA assigns each training example a weight $\mu_i = \exp\!\big(\gamma (1 - p_i)\big)$, where $(1 - p_i)$ represents the model’s uncertainty for example $i$. The parameter $\gamma$ acts as a scaling factor: larger values amplify small differences in uncertainty into larger weight disparities, while smaller values yield more uniform weighting. As a result, the optimal choice of $\gamma$ is fundamentally tied to the distribution of uncertainties in the dataset, as it must induce meaningful weight differentiation over the typical range of uncertainty values.

\begin{figure}[!h]
    \centering
    \includegraphics[width=0.65\linewidth]{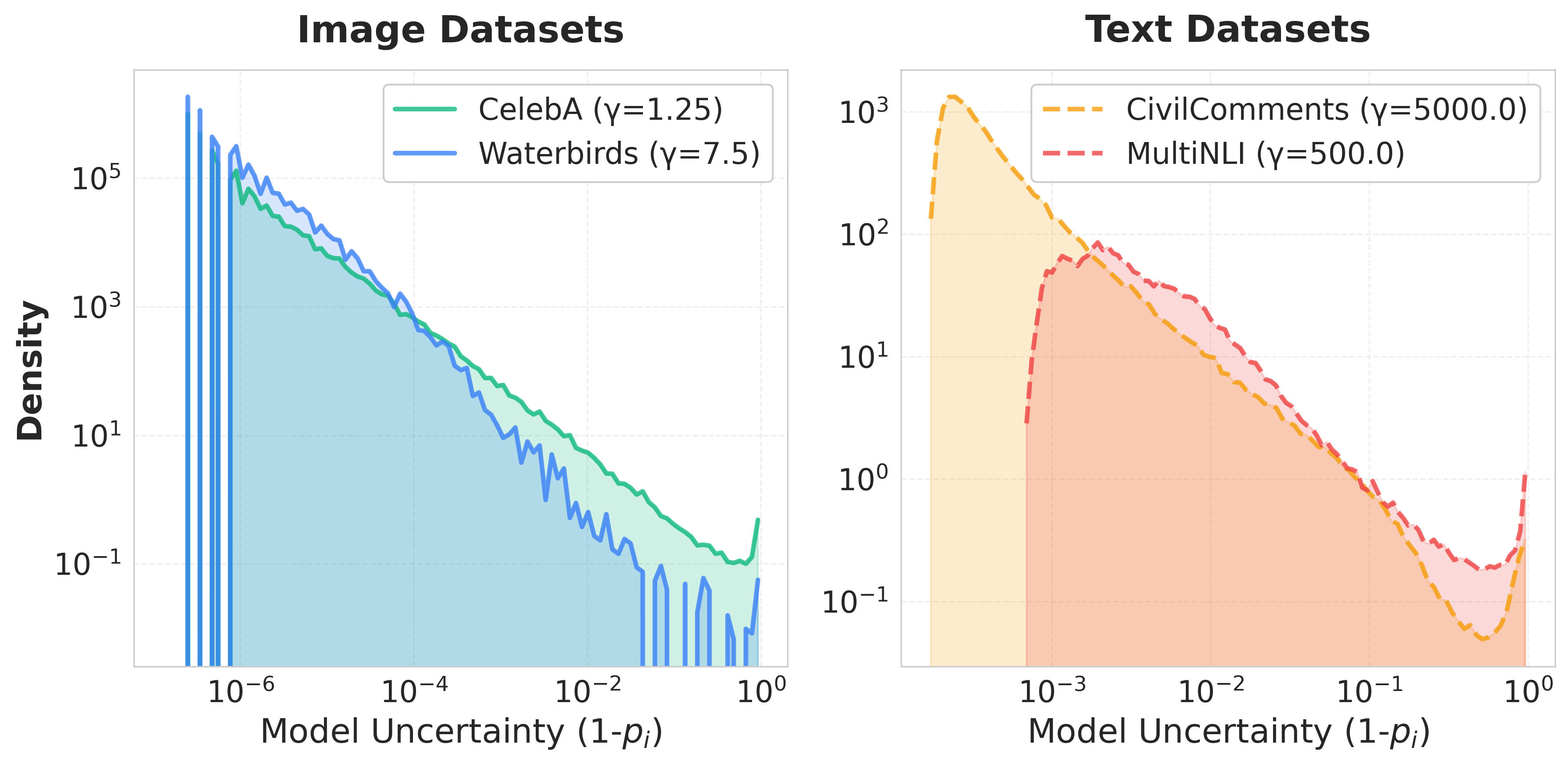}
    \caption{Differences in model uncertainty distributions by dataset, with image dataset having much higher density at very low probabilities and distinct tail of uncertain examples compared to compressed text datasets distributions.}
    \label{fig:confidence_dist}

\end{figure}

Uncertainty distributions vary substantially across dataset (see Figure \ref{fig:confidence_dist}). Most examples in mage datasets such as \textsc{CelebA} and \textsc{Waterbirds} have near-zero uncertainty, reflecting high model confidence, alongside a distinct tail of higher-uncertainty examples (95th percentile $\approx 0.4$--$0.8$). Relatively small values of $\gamma$ ($1$--$8$) are sufficient, as the separation between the bulk and the tail allows even modest scaling to produce meaningful weight differences between confident examples. In contrast, text datasets such as \textsc{CivilComments} and \textsc{MultiNLI} display continuous, compressed distributions: uncertainties are small but non-zero for most examples (median $\approx 0.002$--$0.019$), with values concentrated in a narrow range (interquartile range spanning approximately $0.0005$--$0.03$). Much larger values of $\gamma$ ($500$--$5000$) are required to amplify small differences in uncertainty. 

\subsection{Robustness to data splitting ratio}
\label{ref:splitting_ratio}
We assess how robust our method is to the data splitting ratio used for \(\mathcal{D}_{\mathrm{ERM}}\) for base training and
\(\mathcal{D}_{\mathrm{LEIA}}\) for adaptation. We find that across various ratios, LEIA's performance remains strong in the setting with no information of group relevance (Table \ref{tab:ratio_setting1}) and the setting with complete information of group relevance (Table \ref{tab:ratio_setting2}). 

\begin{table}[!h]
\centering
\caption{Performance comparison under varying $D_{\text{ERM}}/D_{\text{LEIA}}$ ratios in the setting with no group information.}
\begin{tabular}{lcc|cc}
\toprule
 & \multicolumn{2}{c}{\textsc{Waterbirds}} & \multicolumn{2}{c}{\textsc{CivilComments}} \\
\textbf{$\mathcal{D}_{\text{ERM}}/\mathcal{D}_{\text{LEIA}}$} & \textbf{WGA} & \textbf{Avg Acc} & \textbf{WGA} & \textbf{Avg Acc} \\
\midrule
90--10 & $88.8 \pm0.3$ & $92.3 \pm0.4$ & $70.2 \pm0.3$ & $91.0 \pm0.9$ \\
80--20 & $90.1 \pm0.1$ & $93.7 \pm0.7$ & $71.6 \pm0.7$ & $91.4 \pm0.4$ \\
70--30 & $88.8 \pm0.3$ & $92.3 \pm0.4$ & $70.9 \pm0.1$ & $91.1 \pm0.8$ \\
60--40 & $88.8 \pm0.3$ & $92.3 \pm0.4$ & $71.0 \pm0.8$ & $91.1 \pm0.8$ \\
\midrule
ERM & $69.1 \pm4.7$ & $84.1 \pm1.7$ & $63.2 \pm1.2$ & $85.4 \pm0.2$ \\
\bottomrule
\end{tabular}
\label{tab:ratio_setting1}
\end{table}

\begin{table}[!h]
\centering
\caption{Performance comparison under varying $D_{\text{ERM}}/D_{\text{LEIA}}$ ratios in the setting with complete group relevancy information.}
\begin{tabular}{lcc|cc}
\toprule
 & \multicolumn{2}{c}{\textsc{Waterbirds}} & \multicolumn{2}{c}{\textsc{CivilComments}} \\
\textbf{$\mathcal{D}_{\text{ERM}}/\mathcal{D}_{\text{LEIA}}$} & \textbf{WGA} & \textbf{Avg Acc} & \textbf{WGA} & \textbf{Avg Acc} \\
\midrule
90--10 & $89.4 \pm0.3$ & $92.8 \pm0.9$ & $72.2 \pm0.7$ & $91.4 \pm0.4$ \\
80--20 & $90.1 \pm0.1$ & $92.8 \pm0.9$ & $72.9 \pm0.2$ & $90.9 \pm0.8$ \\
70--30 & $89.4 \pm0.3$ & $92.3 \pm0.4$ & $72.0 \pm0.3$ & $91.3 \pm0.5$ \\
60--40 & $89.2 \pm0.3$ & $92.5 \pm0.6$ & $71.7 \pm0.2$ & $91.3 \pm0.4$ \\
\midrule
ERM & $69.1 \pm4.7$ & $84.1 \pm1.7$ & $63.2 \pm1.2$ & $85.4 \pm0.2$ \\
\bottomrule
\end{tabular}
\label{tab:ratio_setting2}
\end{table}
\newpage
\section{Theoretical Perspective on Low-rank Error Informed Adaptation}
\label{sec:theory}

In this section, we provide a theoretical justification of why Low-rank Error Informed Adaptation (LEIA) works.
Our goal is not to derive worst-case guarantees under explicit subgroup labels, but to formalize why
restricting adaptation to a low-dimensional \emph{error-informed subspace} yields an effective and
stable correction mechanism under subpopulation shift.

\subsection{Error Geometry and Error-Weighted Covariance}

Let $e_{\hat{\psi}}(x) \in \mathbb{R}^d$ denote the frozen feature representation learned by the base
model, and let $c_{\hat{\phi}}$ denote the frozen classifier head such that
$f(x) = c_{\hat{\phi}}(e_{\hat{\psi}}(x)) \in \mathbb{R}^C$.
For each example $(x_i, y_i) \in \mathcal{D}_{\mathrm{LEIA}}$, define the per-example loss
$\ell_i := \ell(f(x_i), y_i)$ and the corresponding LEIA weight $\mu_i$.

LEIA considers small additive linear-in-representation adjustments to the classifier head.
Specifically, we consider perturbations of the form
\[
\Delta f(x_i) = \Delta W \, e_{\hat{\psi}}(x_i), \quad \Delta W \in \mathbb{R}^{C \times d}.
\]

Using a first-order Taylor approximation of the weighted empirical risk,
the change in loss under such a perturbation is
\[
\mathcal{L}(f + \Delta f)
\approx
\sum_i \mu_i \ell_i
+
\sum_i \mu_i \nabla_{f_i} \ell_i^\top \Delta W e_{\hat{\psi}}(x_i),
\]
where $\nabla_{f_i} \ell_i \in \mathbb{R}^C$ denotes the gradient of the loss with respect to the logits.

The first-order contribution can be written as
\[
\sum_i \mu_i \nabla_{f_i} \ell_i^\top \Delta W e_{\hat{\psi}}(x_i)
=
\mathrm{tr}\!\left(
\Delta W
\sum_i \mu_i \, e_{\hat{\psi}}(x_i)\nabla_{f_i}\ell_i^\top
\right),
\]
highlighting that effective updates should align with directions in representation space that
are repeatedly associated with high loss.

To capture the geometry of such error-prone regions independent of class-specific gradients,
LEIA computes the error-weighted covariance of representations
\[
\Sigma_{\mathrm{err}}
=
\sum_i \mu_i
\big(e_{\hat{\psi}}(x_i) - \bar{e}\big)
\big(e_{\hat{\psi}}(x_i) - \bar{e}\big)^\top,
\quad
\bar{e} = \sum_i \mu_i e_{\hat{\psi}}(x_i).
\]
The leading eigenvectors of $\Sigma_{\mathrm{err}}$ identify directions along which high-loss
examples exhibit the greatest shared variation, corresponding to latent error modes in
representation space.

\subsection{Low-Rank Constrained Risk Minimization}

LEIA restricts classifier adaptation to the span of the top-$k$ eigenvectors
$V_k \in \mathbb{R}^{d \times k}$ of $\Sigma_{\mathrm{err}}$.
The adapted logits take the form
\[
f_{\mathrm{LEIA}}(x)
=
f(x)
+
A^\top V_k^\top e_{\hat{\psi}}(x),
\quad
A \in \mathbb{R}^{k \times C},
\]
which is equivalent to adding a rank-$k$ update $\Delta W = V_k A$ to the classifier head.

The adaptation objective is
\[
\min_A
\sum_{(x_i,y_i)\in\mathcal{D}_{\mathrm{LEIA}}}
\mu_i \,
\ell\!\left(
f(x_i) + A^\top V_k^\top e_{\hat{\psi}}(x_i),\, y_i
\right).
\]

This restriction has three important properties.

\begin{enumerate}
    \item \textit{Spectral alignment}: The subspace $V_k$ captures directions of maximal error-weighted variance, ensuring that updates
focus on representation directions shared by many high-loss examples.
    \item \textit{Robustness without group labels}: Because $\Sigma_{\mathrm{err}}$ aggregates error structure across examples, LEIA targets latent
error modes without requiring explicit subgroup annotations.
    \item \textit{Stability and efficiency}: Restricting adaptation to $kC$ parameters prevents overfitting and catastrophic forgetting while
preserving the base representation.
    
\end{enumerate}

\subsection{Proposition: Spectral Optimality of the Error Subspace}

\begin{proposition}[Spectral Optimality]
Let $\Sigma_{\mathrm{err}}$ be the error-weighted covariance matrix.
Among all $k$-dimensional subspaces $V$ with orthonormal columns,
the subspace spanned by the top-$k$ eigenvectors $V_k$ maximizes
\[
\mathrm{tr}(V^\top \Sigma_{\mathrm{err}} V).
\]
\end{proposition}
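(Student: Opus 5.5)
The plan is to reduce the statement to the Ky Fan maximum principle via the spectral decomposition of $\Sigma_{\mathrm{err}}$. First, $\Sigma_{\mathrm{err}}$ is symmetric positive semidefinite. It is a convex combination (the $\mu_i$ are nonnegative and sum to one) of the rank-one matrices $(z_i-\bar z)(z_i-\bar z)^\top$. Hence it admits an orthonormal eigendecomposition $\Sigma_{\mathrm{err}} = U\Lambda U^\top$ with $\lambda_1 \ge \dots \ge \lambda_d \ge 0$ and eigenvectors $u_1,\dots,u_d$. We may take $V_k = [u_1,\dots,u_k]$.

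For $V_k$ itself we get $V_k^\top \Sigma_{\mathrm{err}} V_k = \mathrm{diag}(\lambda_1,\dots,\lambda_k)$, so $\mathrm{tr}(V_k^\top \Sigma_{\mathrm{err}} V_k) = \sum_{j\le k}\lambda_j$. It therefore suffices to show $\mathrm{tr}(V^\top \Sigma_{\mathrm{err}} V) \le \sum_{j\le k}\lambda_j$ for every $V\in\mathbb{R}^{d\times k}$ with $V^\top V = I_k$.

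For a general $V$, write $W = U^\top V$, which still satisfies $W^\top W = I_k$. Then
\[
\mathrm{tr}(V^\top \Sigma_{\mathrm{err}} V) = \mathrm{tr}(W^\top \Lambda W) = \sum_{j=1}^d \lambda_j c_j,
\qquad c_j := \sum_{l=1}^k W_{jl}^2 = \|e_j^\top W\|^2 .
\]
The weights satisfy two constraints. First, $\sum_j c_j = \|W\|_F^2 = k$. Second, $0 \le c_j \le 1$, because $c_j = e_j^\top WW^\top e_j$ and $WW^\top$ is an orthogonal projection, so its diagonal entries lie in $[0,1]$.

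This is the one genuinely substantive step, and I expect it to be the main obstacle: everything else is bookkeeping. The cleanest route is to observe that $WW^\top$ is idempotent and symmetric, so its eigenvalues are $0$ or $1$. Alternatively, one can complete $W$ to a $d\times d$ orthogonal matrix and read off that each row has unit norm, so any $k$ of its entries have squared sum at most $1$.

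The proof then finishes with an elementary linear-programming argument. Maximize $\sum_j \lambda_j c_j$ over $c\in[0,1]^d$ with $\sum_j c_j = k$. Since the $\lambda_j$ are sorted decreasingly, we have
\[
\sum_j \lambda_j c_j - \sum_{j\le k}\lambda_j = \sum_{j\le k}\lambda_j(c_j-1) + \sum_{j>k}\lambda_j c_j \le \lambda_k\Big(\sum_{j\le k}(c_j-1) + \sum_{j>k}c_j\Big) = 0 .
\]
The inequality holds because $c_j - 1 \le 0$ and $\lambda_j \ge \lambda_k$ for $j \le k$, while $c_j \ge 0$ and $\lambda_j \le \lambda_k$ for $j > k$. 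This gives the bound, attained at $V_k$.

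We should remark that the maximizing \emph{subspace} is unique only when $\lambda_k > \lambda_{k+1}$; under ties, any choice of top-$k$ eigenvectors attains the maximum. The objective depends only on $\mathrm{span}(V)$, since $\mathrm{tr}(V^\top\Sigma V) = \mathrm{tr}(\Sigma VV^\top)$, so the statement about subspaces is well posed.
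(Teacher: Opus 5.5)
Your proof is correct and follows the paper's route: rotate into the eigenbasis ($W=U^\top V$, the paper's $Q$) and bound $\sum_j \lambda_j c_j$, where the $c_j$ are the squared row norms of $W$. The paper only asserts that the maximum puts all mass on the top coordinates ``subject to the orthonormality constraints,'' whereas you actually prove it via $0\le c_j\le 1$ (from $WW^\top$ being a projection) and the exchange inequality pivoting on $\lambda_k$; your remark that the maximizing subspace is unique only when $\lambda_k>\lambda_{k+1}$ also correctly tempers the paper's closing claim of unique maximization.
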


\begin{proof}
Let $\Sigma_{\mathrm{err}} \in \mathbb{R}^{d \times d}$ be symmetric positive semidefinite, and let its
eigendecomposition be
\[
\Sigma_{\mathrm{err}} = U \Lambda U^\top,
\]
where $U = [u_1,\dots,u_d]$ is an orthonormal basis of eigenvectors and
$\Lambda = \mathrm{diag}(\lambda_1,\dots,\lambda_d)$ with
$\lambda_1 \ge \lambda_2 \ge \cdots \ge \lambda_d \ge 0$.

For any matrix $V \in \mathbb{R}^{d \times k}$ with orthonormal columns ($V^\top V = I_k$),
we can write $V = U Q$ for some $Q \in \mathbb{R}^{d \times k}$ with $Q^\top Q = I_k$.
Then
\[
\mathrm{tr}(V^\top \Sigma_{\mathrm{err}} V)
=
\mathrm{tr}(Q^\top U^\top U \Lambda U^\top U Q)
=
\mathrm{tr}(Q^\top \Lambda Q).
\]

Writing $Q = [q_1,\dots,q_k]$ with $q_j \in \mathbb{R}^d$ and $\|q_j\|_2=1$, we have
\[
\mathrm{tr}(Q^\top \Lambda Q)
=
\sum_{j=1}^k q_j^\top \Lambda q_j
=
\sum_{j=1}^k \sum_{i=1}^d \lambda_i \, q_{ij}^2,
\]
where $q_{ij}$ denotes the $i$-th component of $q_j$.

Since $\lambda_1 \ge \lambda_2 \ge \cdots \ge \lambda_d$, the above expression is maximized
when each $q_j$ places all its mass on the coordinates corresponding to the largest
eigenvalues, subject to the orthonormality constraints $Q^\top Q = I_k$.
This is achieved by choosing $Q = [e_1,\dots,e_k]$, where $e_i$ are the standard basis
vectors in $\mathbb{R}^d$.

Thus, the maximizer is $V = U Q = [u_1,\dots,u_k] = V_k$, the matrix of the top-$k$
eigenvectors of $\Sigma_{\mathrm{err}}$, and the maximum value is
$\sum_{i=1}^k \lambda_i$.

Therefore, the subspace spanned by $V_k$ uniquely maximizes
$\mathrm{tr}(V^\top \Sigma_{\mathrm{err}} V)$ among all $k$-dimensional subspaces with
orthonormal bases.
\end{proof}

\subsection{Implications for Latent Subpopulation Shift}

Consider any latent subgroup $G \subseteq \mathcal{D}_{\mathrm{LEIA}}$ whose mean deviation
lies primarily in the span of $V_k$.
Then LEIA admits a rank-$k$ head update that induces a first-order decrease in the
average weighted loss over $G$.
The magnitude of this decrease scales with the spectral mass of $\Sigma_{\mathrm{err}}$
captured by $V_k$ and the alignment between subgroup gradients and the error subspace. If systematic errors arise from low-dimensional structure in representation space, LEIA provides a principled mechanism for correcting these errors using a constrained,
stable, and group-agnostic adaptation.

\section{Additional related work}
\label{app:related_work_plus}

\subsection{Spurious Correlations}
We elaborate on literature that has explored spurious correlations in particular. Deep learning models often exploit spurious correlations \citep{tu2020robusttospurious, mccoy2019right}—features that are predictive on average but not causally related to the target label—leading to significant degradation in accuracy on certain subpopulations. Recent work has shown that such reliance on spurious features is quite common across various machine learning benchmarks \citep{salaudeen2025aggregationhidesoutofdistributiongeneralization}. These shortcut features can include image textures \citep{geirhos2020shortcut}, background artifacts \citep{ribeiro2016should, beery2018recognition}, infrequent tokens in text \citep{wang2022identifyingmitigatingspuriouscorrelations, du2023learnshortcutanalyzingmitigating}, or clinical heuristics in medical data \citep{zech2018variable, winkler2019association, shahamatdar2024deceptive, yang2024limits}. As a result, average accuracy may be high while performance on certain subgroups is unacceptably low. Models can also treat demographic attributes (e.g., gender or race) as spurious features that are correlated with the label in training data but non-causal, leading to systematic errors on minority subgroups 
\citep{zhao2017men, banerjee2023shortcuts, yang2024mitigating}. Several previous works have suggested methods to mitigate reliance on spurious correlation \citep{sagawa2020GDRO, idrissi2021simple, singla2022core, wang2020towards}. Other works have suggested explanations for why spurious correlations are learned in ERM training \citep{nagarajan2024understandingfailuremodesoutofdistribution} and analyzing the representational spaces \citep{lu2024neural}.  

\subsection{Robust Learning with Group Attributes}
When group or environment annotations are available, group-robust training methods optimize worst-group error \citep{sagawa2020GDRO}, use group reweighting \citep{japkowicz2000class, kang2020decouplingrepresentationclassifierlongtailed}, balanced sampling \citep{idrissi2021simple, japkowicz2000class, zhang2017mixup}, retraining only the classifier head on a group-balanced validation set \citep{kirichenko2022dfr, rudner2024mindgapimprovingrobustness}, and adopting domain-specific augmentations \citep{goel2020model}. In a similar vein, some methods aim to optimize a ``Pareto-fair” objective, more general than simply the worst-group error \citep{balashankar2019fair, pmlr-v119-martinez20a}. However, these methods all rely on group annotations during training or validation, which may not be feasible in many settings. Moreover, these methods assume oracle knowledge of group importance. 

\subsection{Robust Learning without Group Attributes}

In settings where group labels are unavailable \citep{hashimoto2018fairness}, recent methods aim to identify and emphasize its failure modes by leveraging contrastive loss between hard examples \citep{zhang2022cnc}, discover latent group structure \citep{sohoni2020george}, finds violations of the invariant risk minimization objective \citep{creager2021environment, ahmed2021systematic}, or by semi-supervised learning to infer group attribute \citep{nam2020learning, sohoni2020george}. Drawing on DFR, several other works have shown that retraining the last layer of the classifier can be effective without explicit group labels \citep{qiu2023simplefastgrouprobustness, labonte2023towards}. Our method, LEIA, shares this two-stage structure and optimizes in terms of the last layer but uniquely constrains adaptation to an error-informed low-rank subspace in the representation space. Besides the two stage paradigm, there are methods that rely on resampling and data attribution techniques \citep{kang2020decouplingrepresentationclassifierlongtailed, jain2024improving}. 

\subsection{Low-Rank Corrections}
Our method also draws on prior work that established how biases can be mitigated with a few dimensions in transformers and vision language models \citep{zhao2025bias, jang2025target}. These train-free or post-hoc debiasing studies identify small number of dominant latent directions capturing the biased signals \citep{chuang2023debiasing, jung2024unified}, and interventions along these directions can substantially reduce bias \citep{chuang2023debiasing, gerych2024bendvlm, hirota2024saner, jung2024unified, zhang2025joint, zhao2025bias, jang2025target}. Additionally, parameter-efficient adaptation techniques such as low-rank updates (LoRA) \citep{hu2022lora} have shown that task- or subgroup-specific improvements can be achieved by updating only a small fraction of model parameters, avoiding the cost of full fine-tuning \citep{hu2022lora, rimsky2024steering, wu2024reft, yin2024lofit}.

\end{document}